\newtheorem{problem*}{Problem}
\newtheorem{theorem}{Theorem}
\newtheorem{definition}{Definition}
\newtheorem*{example*}{Example}
\newcommand{\nosemic}{\renewcommand{\@endalgocfline}{\relax}}% Drop semi-colon ;
\newcommand{\dosemic}{\renewcommand{\@endalgocfline}{\algocf@endline}}% Reinstate semi-colon ;
\let\oldnl\nl% Store \nl in \oldnl
\newcommand{\nonl}{\renewcommand{\nl}{\let\nl\oldnl}}% Remove line number for one line
\DeclareMathOperator*{\argmax}{argmax}
\newcommand{\cE}{\mathcal{E}} 
\newcommand{\cG}{\mathcal{G}} 
 \newcommand{\cL}{\mathcal{L}}
\newcommand{\cM}{\mathcal{M}}
 \newcommand{\cY}{\mathcal{Y}}
 \newcommand{\cX}{\mathcal{X}}
\newcommand{\EE}{\mathbb{E}}
\newcommand\DCG{\textsl{DCG}}
  \providecommand\BibTeX{{%
    \normalfont B\kern-0.5em{\scshape i\kern-0.25em b}\kern-0.8em\TeX}}}
\begin{document}

\title{End-to-end Learning for Fair Ranking Systems}

\author{James Kotary}
\affiliation{%
  \institution{Syracuse University}
  \city{Syracuse, NY}
  \country{USA}
}
\email{jkotary@syr.edu}

\author{Ferdinando Fioretto}
\affiliation{% 
  \institution{Syracuse University}
  \city{Syracuse, NY}
  \country{USA}
}
\email{ffiorett@syr.edu}

\author{Pascal Van Hentenryck}
\affiliation{%
  \institution{Georgia Institute of Technology}
  \city{Atlanta, GA}
  \country{USA}
}
\email{pvh@isye.gatech.edu}

\author{Ziwei Zhu}
\affiliation{%
  \institution{Texas A\&M University}
  \city{College Station, TX}
  \country{USA}
}
\email{zhuziwei@tamu.edu}

\renewcommand{\shortauthors}{Kotary and Fioretto, et al.}

\begin{abstract}
The learning-to-rank problem aims at ranking items to maximize exposure of those most relevant 
to a user query. A desirable property of such ranking systems is to guarantee some notion of fairness among specified item groups. 
While fairness has recently been considered in the context of learning-to-rank systems, current methods cannot provide guarantees on the fairness of the proposed ranking policies.
This paper addresses this gap and introduces \emph{Smart Predict and Optimize for Fair Ranking (SPOFR)}, an integrated optimization 
and learning framework for fairness-constrained learning to rank. The end-to-end SPOFR framework includes a constrained optimization sub-model and produces ranking policies that are guaranteed to satisfy fairness constraints, while allowing for fine control of the fairness-utility tradeoff. SPOFR is shown to significantly improve current state-of-the-art fair learning-to-rank systems with respect to established performance metrics.
\end{abstract}

\maketitle\sloppy\allowdisplaybreaks

\section{Introduction}

 Ranking systems are a pervasive aspect of our everyday lives: They 
 are an essential component of online web searches, job searches, property renting, streaming content, and even potential friendships. 
 In these systems, the items to be ranked are products, job candidates,
 or other entities associated with societal and economic benefits, and 
 the relevance of each item is measured by implicit feedback from users
 (click data, dwell time, etc.). 
 It has been widely recognized that the position of an item in the ranking has a strong influence on its exposure, selection, and,  
 ultimately economic success.

Surprisingly, though, the algorithms used to learn these rankings 
are typically oblivious to their potential 
disparate impact on the exposure of different groups of items. 
For example, it has been shown that in a job candidate ranking system, 
a small difference in relevance can incur a large difference in exposure %in aggregate
for candidates from a minority group \cite{elbassuoni:19}. Similarly, 
in an image search engine, a disproportionate number of males may be 
shown in response to the query `CEO'~\cite{singh2018fairness}. 

Ranking systems that ignore fairness considerations, or are unable to bound these effects, are prone to the ``rich-get-richer'' dynamics that exacerbate the disparate impacts. The resulting biased rankings can be detrimental to users, ranked items, and ultimately society. 
{\em There is thus a pressing need to design learning-to-rank (LRT) systems
that can deliver accurate ranking outcomes while controlling disparate impacts.} 
  
Current approaches to fairness in learning-to-rank systems 
rely on using a loss function representing a weighted 
combination of expected task performance and fairness. 
This strategy is effective in improving the fairness of predicted 
rankings on average, but has three key shortcomings: 
\textbf{(1)} The resulting rankings, even when fair in expectation across 
all queries, can admit large fairness 
disparities for some queries. This aspect may contribute to exacerbate the rich-get-richer dynamics, while giving a false sense of controlling the  system's disparate impacts.
\textbf{(2)} While a tradeoff between fairness and ranking utility is usually 
desired, these models cannot be directly controlled through the specification 
of an allowed magnitude for the violation of fairness. 
\textbf{(3)} A large hyperparameter search is required to find 
the weights of the loss function that deliver the desired performance tradeoff.  Furthermore, each of these issues becomes worse as the number of protected groups increases.

This paper addresses these issues and proposes the first fair learning 
to rank system--named Smart Predict and Optimize for Fair Ranking (SPOFR)--that 
\emph{guarantees} satisfaction of fairness in the resulting rankings. 
The proposed framework uses a unique integration of a constrained 
optimization model within a deep learning pipeline, which is trained 
end-to-end to produce \emph{optimal} ranking policies with respect to 
empirical relevance scores, while enforcing fairness constraints. 
In addition to providing a certificate on the fairness requirements, 
SPOFR allows the modeler to enforce a large number of fairness concepts 
that can be formulated as linear constraints over ranking policies, 
including merit-weighted and multi-group fairness.

\textbf{Contributions}
The paper makes the following contributions:
\textbf{(1)}  
It proposes \emph{SPOFR}, a Fair LTR system that predicts and optimizes through an end-to-end composition of differentiable functions, guaranteeing the satisfaction of user-specified group fairness constraints.
\textbf{(2)} Due to their discrete structure, imposing fairness constraints 
over ranking challenges the computation and back-propagation of 
gradients. To overcome this challenge,  \emph{SPOFR} introduces a novel training scheme 
which allows direct optimization of empirical utility metrics on predicted 
rankings %, by analytical evaluation of expected utility 
using efficient back-propagation through constrained optimization programs. 
\textbf{(3)} The resulting system ensures uniform fairness guarantees \emph{over all 
queries}, a directly controllable fairness-utility tradeoff, and guarantees 
for multi-group fairness.
\textbf{(4)} These unique aspects are demonstrated on several LTR datasets in the partial information setting. Additionally, SPOFR is shown to significantly improve current state-of-the-art fair LTR systems with respect to established performance metrics.

\section{Related Work}

Achieving fairness in ranking systems has been studied in multiple
contexts. The imposition of fairness
constraints over rankings is nontrivial and hinders common
gradient-based methods due to their discrete nature. To address this challenge, multiple notions of fairness in ranking have been
developed. \citet{celis2017ranking} propose to directly require fair representation between groups within each prefix of a ranking, by
specifying a mixed integer programming problem to solve for rankings of the desired form. ~\citet{zehlike2017fa} design a greedy randomized algorithm to produce rankings which satisfy fairness up to a threshold
of statistical significance. The approach taken by ~\citet
{singh2018fairness} also constructs a randomized ranking policy
by  formalizing the ranking policy as a solution to a linear optimization problem with constraints ensuring fair exposure between groups in expectation. 

Fairness in learning-to-rank research is much sparser. This problem is studied by \citet{zehlike2020reducing}, which adopts the LTR approach of \citet{cao2007learning} and introduces a penalty term to the loss function to account for the violation of group fairness in the top ranking position. Stronger fairness results are reported by \citet{yadav2021policy} and \citet{singh2019policy}, which apply a policy gradient method to learn fair ranking policies. The notion of fairness %in expectation is the
%same as in \citet{singh2018fairness}, and is also enforced by a
is enforced by a penalty to its violation in the  loss function, forming a weighted combination of terms representing fairness violation and ranking utility over rankings sampled from the learned polices %The model is trained to optimize the discrete rankingmetrics 
using a REINFORCE algorithm~\cite{williams1992simple}.

While these methods constitute a notable contribution to the state of the art, by being penalty-based, they inevitably present some of the limitations discussed in the previous section. In contrast, this paper introduces a novel LTR technique for learning fair rankings by the use of constrained optimization programs trained end-to-end with deep learning \cite{kotary2021end}. By integrating constrained optimization in the training cycle of a deep learning model, the proposed solution \emph {guarantees} the satisfaction of fairness constraints, within a prescribed tolerance, in the resulting rankings. 

\begin{table}[!t]
    \centering
    \resizebox{0.95\linewidth}{!}{
     \begin{tabular}{l l} 
     \toprule
     \textbf{Symbol} & \textbf{Semantic}\\
     \midrule
     % $q$      & query \\
     $N$      & Size of the training dataset\\
     $n$      & Number of items to rank, for each query\\
     $\bm{x}_q = (x^i_q)_{i=1}^n$ & List of items to rank for query $q$\\
     $\bm{a}_q = (a^i_q)_{i=1}^n$ & protected groups associated with items $x^i_q$\\
     $\bm{y}_q = (y^i_q)_{i=1}^n$ & relevance scores (1-hot labels)\\
     $\sigma$ & Permutation of the list $[n]$ (individual rankings)\\
     $\Pi$    & Ranking policy \\
     $\bm{v} = (v_i)_{i=1}^n$ & Position bias vector\\
     $\bm{w} = (w_{i})_{i=1}^n$ & Position discount vector \\
     \bottomrule
    \end{tabular}  
}
\caption{Common symbols}
\label{table:symbols}
\end{table}

\section{Settings and Goals}
\label{sec:settings}

The LTR task consists in learning a mapping
between a list of $n$ \emph{items} and a permutation $\sigma$ of the list $[n]$, which defines the order in which the items should be ranked in response to a user query. The LTR setting considers a training dataset $\bm{D} = (\bm{x}_q, \bm
{a}_q, \bm{y}_q)_{q=1}^{N}$ where the $\bm{x}_q \in \cX$ describe
lists $(x_q^i)_{i=1}^n$ of $n$ items to rank, with each item $x_q^i$
defined by a feature vector of size $k$. The $\bm{a}_q = (a_q^i)_
{i=1}^n$ elements describe protected group attributes in some domain $\cG$ for
each item $x_q^i$. The $\bm{y}_q \in \cY$ are
supervision labels $(y_q^i)_{i=1}^n$ that associate a non-negative value, called \emph{relevance scores}, with each item.
Each sample $\bm{x}_q$ and its corresponding label $\bm{y}_q$ in $\bm
{D}$ corresponds to a unique \emph{query} denoted $q$. For example, on a image web-search context, a query $q$ denotes the search keywords, e.g., ``nurse'', the feature vectors $x_q^i$ in $\bm{x}_q$ encode representations of the items relative to $q$, the associated protected group attribute $a_q^i$ may denote gender or race, and the label $y_q^i$ describes the relevance of item $i$ to query $q$.

The goal of learning to rank is to predict, for any query $q$, a distribution of rankings $\Pi$, called a \emph{ranking policy}, from which individual rankings can be sampled. 
The utility $U$ of a ranking policy $\Pi$ for query $q$ is defined as
\begin{equation}
    U(\Pi,q)   = \mathbb{E}_{\sigma \sim \Pi}\;
    \left[\Delta(\sigma, \bm{y}_q)\right],
    \label{eq:policy_utility}
\end{equation} 
where $\Delta$ measures the utility of a given ranking
with respect to given relevance scores $\bm{y}_q$. Note that, while, in isolation,  single lists of relevance scores on items may not be sufficient to rank them, in aggregate, these scores will provide enough information to predict a useful ranking notion.

Let $\mathcal
{M_\theta}$ be a machine learning model, with parameters $\theta$, which takes as input a query and returns a ranking policy.
The LTR goal is to find parameters $\theta^*$ that maximize the
empirical risk:
\begin{equation}
    \theta^*   = \argmax_{\theta} \;\;
    %\mathbb{E}_{ (x_q, y_q) \sim \mathcal{D}}\; 
    \frac{1}{N} \sum_{q = 1}^N U(\mathcal{M}_\theta(\bm{x}_q), \bm{y}_q).
    \tag{P}
    \label{eq:policy_ERM}
\end{equation}
This description refers to the \textit{Full-Information} 
setting~\cite{PropSVMRank:JoachimsSS17}, in which all target relevance scores are assumed to be known. While this setting is described to ease notation, the methods proposed in this work are not limited to this setting and Section \ref{sec:experiments} and Appendix \ref{sec:partial_info} discuss the \emph{Partial-Information} setting.

\subsubsection*{\bf Fairness} This paper aims at learning ranking policies that satisfy group fairness. It considers a predictor $\cM$
 satisfying some group fairness notion on the learned ranking
 policies with respect to protected attributes $\bm{a}_q$. A desired
 property of fair LTR models is to ensure that, for a given query,
 items associated with different groups receive equal exposure over
 the ranked list of items. The exposure $\mathcal{E}
 (i, \sigma)$ of item $i$ within some ranking $\sigma$ is
 a function of only its position, with higher positions receiving
 more exposure than lower ones. Thus, similar to \cite{singh2019policy}, this exposure is quantified by  $\mathcal{E}(i, \sigma) = v_{\sigma_i}$, where  the \emph{position bias} vector $\bm{v}$ is defined with elements $v_j = \nicefrac{1}{(1+j)^p}$, for $j \in [n]$ and with $p > 0$ being an  arbitrary power.

For ranking policy $\cM_\theta(\bm{x}_q)$ and
query $q$, fairness of exposure requires that, for every group indicator $g \in \cG$, 
$\cM$'s rankings are statistically independent of the protected 
attribute $g$:
\begin{equation}
\label{eq:fairness_of_exposure}
  \EE_{\substack{\sigma \sim \cM_\theta(\bm{x}_q) \\ i \sim [n]}} 
  \left[ \cE \left(i, \sigma \right) | a_q^i = g \right]
  = 
  \EE_{\substack{\sigma \sim \cM_\theta(\bm{x}_q)\\ i \sim [n]}} 
  \left[ \cE \left(i, \sigma \right) \right].
  %\;\; \forall A \in \cA
\end{equation}
This paper considers bounds on fairness defined as the difference between the group and population level terms, i.e.,
\begin{equation}
\label{eq:fairness_violation}
  \nu(\cM_\theta(\bm{x}_q), g) = 
  \EE \left[ \cE(i, \sigma) | a_q^i = g \right]
  - 
  \EE \left[ \cE (i, \sigma ) \right].
\end{equation}

\begin{definition}[$\delta$-fairness]
\label{def:delta_fairness}
A model $\cM_\theta$ is $\delta$-fair, with respect to exposure, 
if for any query $q \in [N]$ and group $g \in \cG$: 
\[
\left| \nu(\cM_\theta(\bm{x}_q), g) \right| \leq \delta. 
\]
In other words, the fairness violation on the resulting ranking 
policy is upper bounded by $\delta \geq 0$.
\end{definition}

The goal of this paper is to design accurate LTR models that guarantee 
$\delta$-fairness for any prescribed fairness level $\delta \geq 0$. 
As noted by \citet{agarwal:18} and \citet{Fioretto:ECAI20}, 
several fairness notions, including those considered 
in this paper, can be viewed as linear constraints between the properties 
of each group with respect to the population. While the above description 
focuses on exposure, the methods discussed in this paper can handle \emph{any} 
fairness notion that can be formalized as a (set of) linear constraints, 
including \emph{merit weighted fairness}, introduced in Section \ref{sec:optimize}. A summary of the common adopted symbols and their associated semantics is provided in Table \ref{table:symbols}. 

\section{Learning Fair Rankings: Challenges}
\label{section:learning_fair_ranking}

When interpreted as constraints of the form \eqref{eq:fairness_violation}, fairness properties can be explicitly imposed to problem \eqref{eq:policy_ERM}, resulting in a constrained empirical risk problem, formalized as follows:
\begin{subequations}
    \label{eq:fair_ERM}
\begin{align}
  \label{eq:fair_ERMa}
    \theta^*   = \argmax_{\theta} &\;\;
    \frac{1}{N} \sum_{q = 1}^N U(\mathcal{M}_\theta(\bm{x}_q), \bm{y}_q)\\
    \label{eq:fair_ERMb}
    \texttt{s.t.} &\;\;
    \left|\nu(\cM_\theta(\bm{x}_q), g)\right| \leq \delta \;\; \forall q \in [N], g \in \cG.
\end{align}
\end{subequations}
Solving this new problem, however, becomes challenging due to the 
presence of constraints. 
Rather than enforcing constraints (\ref{eq:fair_ERMb}) exactly, 
state of the art approaches 
in fair LTR (e.g., \cite{singh2018fairness,yadav2021policy})  rely on augmenting the loss function \eqref{eq:fair_ERMa} 
with a term that penalizes the constraint violations $\nu$ weighted by 
a multiplier $\lambda$.
This approach, however, has several undesirable properties:
\begin{enumerate}[leftmargin=*, parsep=0pt, itemsep=2pt, topsep=2pt]
    \item Because the constraint violation term is applied at the 
    level of the loss function, it applies only on average over the 
    samples encountered during training. {Because the sign ($\pm$) 
    of a fairness violation depends on which group is favored,  
    disparities in favor of one group can cancel out those 
    in favor of another group for different queries.} This can lead 
    to models which predict individual policies that are far 
    from satisfying fairness in expectation, as desired. These effects 
    will be shown in Section~\ref{sec:experiments}.
    \item The multiplier $\lambda$ must be treated as a hyperparameter, 
    increasing the computational effort required to find desirable solutions. 
    This is already challenging for binary groups and the task becomes 
    (exponentially) more demanding with the increasing of the number of 
    protected groups.

    \item When a tradeoff between fairness and utility is desired, it
     cannot be controlled by specifying an allowable magnitude for
     fairness violation. This is due to the lack of a reliable
     relationship between the hyperparameter $\lambda$ and the
     resulting constraint violations. In particular, choosing
     $\lambda$  to satisfy Definition \ref{def:delta_fairness} for a
     given $\delta$ is near-impossible due to the sensitivity and
     unreliability of the relationship between these two values.
\end{enumerate}

\noindent
The approach proposed in this paper avoids these difficulties by 
providing an end-to-end integration of predictions and optimization into a single machine-learning pipeline, where (1) fair policies are obtained by an optimization model using the predicted relevance scores and (2) the utility metrics are back-propagated from the loss function to the inputs, through the optimization model and the predictive models. This also ensures that the fairness constraints are satisfied on \emph{each} predicted ranking policy.

\section{SPOFR}
\label{section:spofr}

\begin{figure*}[!t]
\includegraphics[width=0.99\textwidth]{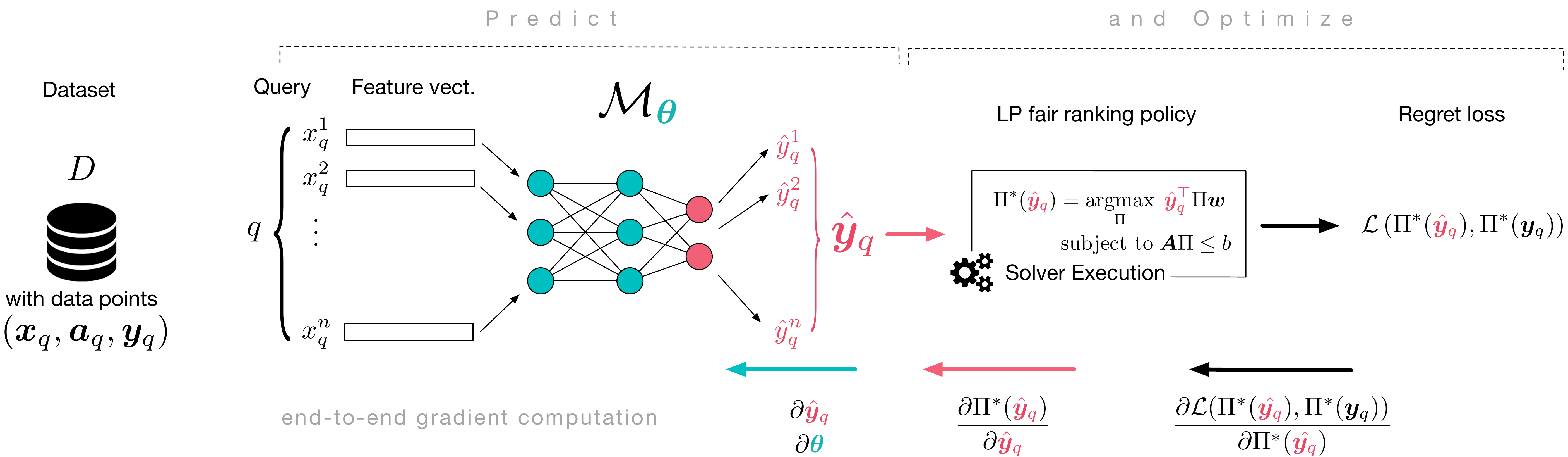}
\caption{SPOFR. A single neural networks learns to predict item scores from individual feature vectors, which are used to construct a linear objective function for the constrained program that produces a ranking policy.
\label{fig:schema}}
\end{figure*}

This section presents the proposed \emph{Smart Predict and Optimize
for Fair Ranking} (SPOFR) framework and analyzes the properties of the
resulting predicted ranking policies.

\smallskip\noindent\textbf{Overview.}
The underlying idea behind SPOFR relies on the realization
that constructing an optimal ranking policy $\Pi_q$ associated with
a query $q$ can be cast as a linear program (as detailed in the
next section) which relies only on the relevance scores $\bm
{y}_q$. The cost vector of the objective function of this program is
however not observed, but can be predicted from the feature vectors
$x_q^i$ ($i\in [n])$ associated with the item list $\bm
{x}_q$ to rank. The resulting framework thus operates into three
steps:
\begin{enumerate}[leftmargin=*, parsep=0pt, itemsep=2pt, topsep=-2pt]

\item First, for a given query $q$ and its associated item list $\bm
 {x}_q$, a neural network model $\cM_\theta$ is used to predict
 relevance scores $\hat{\bm{y}}_q = (\hat{y}_q^1, \ldots, \hat
 {y}_q^n)$; 

\item Next, the predicted relevance scores are used to specify the
 objective function of a linear program whose solution will result in
 a \emph{fair} optimal (with the respect to the predicted scores)
 ranking policy $\Pi^*(\hat{\bm{y}}_q)$; 

\item Finally, a regret function, which measures the loss of
 optimality relative to the \emph{true} optimal policy $\Pi^*(\bm
 {y}_q)$ is computed, and gradients are back-propagated along each
 step, including in the $\argmax$ operator adopted by the linear
 program, creating an end-to-end framework.
\end{enumerate}

\noindent
The overall scheme is illustrated in Figure \ref{fig:schema}. It is
important to note that, rather than minimizing a standard error
(such as a mean square loss) between the predicted quantities $\hat
{\bm{y}}_q$ and the target scores $\bm{y}_q$, SPOFR \emph{minimizes
directly a loss in optimality of the predicted ranking with respect
to the optimal ones}.
Minimizing this loss is however challenging as ranking are discrete
structures, which requires to back-propagate gradients through a linear program. These steps are examined in detail in the rest of this section.

While the proposed framework is general and can be applied to any \emph
{linear} utility metric $U$ for rankings (see Problem \eqref
{eq:policy_utility}), this section grounds the presentation on a widely adopted utility metric, the \emph{Discounted Cumulative Gain (DCG)}:
\begin{equation}
    \DCG(\sigma, \bm{y}_q) = \sum_{i=1}^n y_q^i w_{\sigma_i} ,
\end{equation}
where $\sigma$ is a permutation over $[n]$, $\bm{y}_q$ are the true 
relevance scores, and $\bm{w}$ is an arbitrary weighting vector over ranking positions, capturing the concept of 
\emph{position discount}. Commonly, and throughout this paper, 
$\nicefrac{w_{i} = {1}}{\log_2\left(1+i \right)}$. 
Note that following \cite{yadav2021policy,singh2019policy}, 
these discount factors are considered distinct from the position bias 
factors $\bm{v}$ used in the calculation of group exposure.

\subsection{Predict: Relevance Scores}
\label{sec:predict}

Given a query $q$ with a list of items $\bm{x}_q = (x_q^1,\ldots,x_q^n)$ 
to be ranked, the \textsl{predict} step uses a \emph{single} fully 
connected ReLU neural network $\cM_\theta$ acting on each individual 
item $x_q^i$ to predict a score $\hat{y}_q^i$ $(i=1,\ldots,n)$.
Combined, the predicted scores for query $q$ are denoted with $\hat{\bm{y}}_q$ and serve as the cost vector associated with the optimization problem solved in the next phase.

\subsection{Optimize: Optimal Fair Ranking Policies} 
\label{sec:optimize}

The predicted relevance scores $\hat{\bm{y}}_q$, 
combined with the constant position discount values $\bm{w}$, can be used to form a linear function that estimates the utility metric (\DCG{}) of a ranking policy. 
Expressing the utility metric as a linear function makes it possible to represent the optimization step of our end-to-end pipeline as a linear program.  

\begin{model}[!t]
{%\small
    \caption{LP Computing the Fair Ranking Policy}
    \label{model:fair_rank}
    \vspace{-6pt}
    \begin{subequations}
    \begin{align}
    \label{eq:6a}
    \Pi^*(\hat{\bm{y}}_q) = {\argmax}_{\bm{\Pi}} &\;\; 
     \hat{\bm{y}}_q^\top \, \Pi \, \bm{w} \\
    \mbox{subject to:} &\;\;  
    \label{eq:6b}
             \sum_{j} {\Pi}_{ij} = 1 \;\; \forall i \!\in\! [n]\\
    \label{eq:6c}
            & \;\; \sum_{i} {\Pi}_{ij} = 1 \;\; \forall j  \!\in\! [n]\\
    \label{eq:6d}
            & \;\; 0 \leq  {\Pi}_{ij}  \leq 1 \;\; \forall i,j \!\in\! [n]\\
    \label{eq:6e}
            & \;\; 
            |\nu(\Pi, g)| \leq \delta \;\; \forall g \in \cG
    \end{align}
    \end{subequations}
    }
    \vspace{-12pt}
\end{model}

\subsubsection*{\bf Linearity of the Utility Function}
\label{sec:utility}
The following description omits subscripts ``${}_q$'' for readability. 
The references below to ranking policy $\Pi$ and relevance scores $\bm{y}$ are to be interpreted in relation to an underlying query~$q$.

Using the Birkhoff--von Neumann decomposition~\cite{birkhoff1940lattice}, any $n \times n$ doubly stochastic matrix ${\Pi}$\footnote{A slight abuse of notation is used to 
refer to $\Pi$ as a matrix of marginal probabilities encoding the homonyms 
ranking policy.} can be decomposed into a convex combination of at most $(n-1)^2+1$ permutation matrices $P^{(i)}$, each associated 
with a coefficient $\mu_i \leq 0$, which can then represent rankings 
$\sigma^{(i)}$ {under the interpretation $\bm{w}_{\sigma^{(i)}} = P^{(i)}\bm{w}$}. 
A ranking policy is inferred from the set of resulting convex 
coefficients $\mu_i$, which sum to one, forming 
a discrete probability distribution: each permutation has likelihood 
equal to its respective coefficient 
\begin{equation}
\label{eq:birkhoff}
{\Pi} = \sum_{i=1}^{(n-1)^2+1} \!\!\!\! \mu_i P^{(i)}.
\end{equation}
Next, note that any linear function on rankings can be formulated 
as a linear function on their permutation matrices, which can then be 
applied to any square matrix. 
In particular, applying the DCG operator to a doubly stochastic matrix ${\Pi}$
results in the expected DCG over rankings sampled from its inferred policy. 
Given item relevance scores $\bm{y}$:
\begin{align*}
\label{eq:birkhoff_dcg}
    \mathbb{E}_{\sigma \sim \Pi} \DCG(\sigma, \bm{y})  &   = \sum_{i=1}^{(n-1)^2+1} \!\!\!\!\mu_i \, \bm{y}^\top P^{(i)} \, \bm{w}  \\
    & = \bm{y}^\top \left(\sum_{i=1}^{(n-1)^2+1} \!\!\!\!\mu_i \,  P^{(i)} \right)\,\bm{w} 
      = \bm{y}^\top \Pi \, \bm{w} \tag{by Eq.~\eqref{eq:birkhoff}}.
\end{align*}

The expected DCG of a ranking sampled from a ranking policy $\Pi$ can thus be represented as a linear function on $\Pi$, which serves as the objective function for Model \ref{model:fair_rank}
(see Equation \eqref{eq:6a}).
\emph{This analytical evaluation of expected utility is key to optimizing 
fairness-constrained ranking policies in an end-to-end manner}. 

Importantly, and in contrast to state-of-the art methods, this approach does not require sampling from ranking policies during training in order to evaluate ranking utilities. Sampling is only required during deployment of the ranking model.

\subsubsection*{\bf Ranking Policy Constraints} 
Note that, with respect to \emph{any} such linear objective function,
the optimal fair ranking policy $\Pi^*$ can be found by solving
a linear program (LP). The linear programming model for optimizing fair
ranking \DCG{} functions is presented in Model \ref
{model:fair_rank}, which follows the formulations presented in \cite{singh2018fairness}.

The ranking policy predicted by the SPOFR model takes the form of
a doubly stochastic $n\times n$ matrix $\Pi$, in which ${\Pi}_{ij}$ represents the marginal 
probability that item $i$ takes
position $j$ within the ranking. The doubly stochastic form is
enforced by equality constraints which require each row and column of
$\Pi$ to sum to $1$. With respect to row $i$, these constraints
express that the likelihood of item $i$ taking any of $n$ possible
positions must be equal to $1$ (Constraints \eqref{eq:6b}). Likewise, 
the constraint on column $j$ says that the total probability of some 
item occupying position $j$ must also be $1$ (Constraints \eqref{eq:6c}). 
Constraints \eqref{eq:6d}
require that each ${\Pi}_{ij}$ value be a probability. 

For the policy implied by $\Pi$ to be fair, additional \emph{fairness constraints} must be introduced. 

\smallskip\noindent{\bf Fairness Constraints.} 
\label{sec:fairness}
Enforcing fairness requires only one 
additional set of constraints, which ensures that the exposures
are allocated fairly among the distinct groups. The expected exposure of item $i$ in rankings $\sigma$ derived from 
a policy matrix $\Pi$ can be expressed in terms of position bias factors 
$\bm{v}$ as  
\(
  \EE_{\sigma \sim \Pi} \cE(i, \sigma) = \sum_{j=1}^n \Pi_{ij} v_j.
\)
The $\delta$-fairness of exposure constraints associated with predicted 
ranking policy $\Pi$ and group $g \in \cG$ becomes:
\begin{equation}
\label{eq:fairness_one_group}
    \left| \left(\frac{1}{|G_q^g|}  \mathbbm{1}_{g}  - \frac{1}{n}  
    \mathbbm{1} \right)^\top \Pi \, \bm{v} \right| \leq \delta,
\end{equation}
where, $G_q^g = \{ i : (a_q^i)=g   \}$, $\mathbbm{1}$ is the vector of 
all ones, and $\mathbbm{1}_g$ is a vector whose values equal to $1$ 
if the corresponding item to be ranked is in $G_q^g$, and $0$ otherwise. 
This definition is consistent with that of Equation \eqref{eq:fairness_of_exposure}. 
It is also natural to consider a notion of weighted fairness of exposure: 
\begin{equation}
\label{eq:fairness_one_group_weighted}
    \left| \left(\frac{\mu}{|G_q^g|}  
    \mathbbm{1}_{g}  - \frac{\mu_g}{n}  \mathbbm{1} \right)^\top \Pi\, \bm{v} \right| \leq \delta,
\end{equation} which specifies that group $g$ receive exposure in
 proportion to the weight $\mu_g$. In this paper, where applicable
 and for a notion of \emph{merit-weighted fairness of
 exposure}, $\mu_g$ is chosen to be the average relevance score of items in group $g$, while $\mu$ is the average over all items.  Note that, while the above  are natural choices for
 fairness in ranking systems, \emph{any linear constraint can be
 used instead}.

\subsection{Regret Loss and SPO Training}
The training of the end-to-end fair ranking model uses a loss function 
that minimizes the \emph{regret} between the exact and approximate policies, i.e., 
\begin{equation}
\label{eq:dcg_regret}
\cL(\bm{y}, \hat{\bm{y}}) = \bm{y}^\top \Pi^*(\bm{y})\, \bm{w} - 
                            \bm{y}^\top \Pi^*(\hat{\bm{y}})\, \bm{w}.
\end{equation}
To train the model with stochastic 
gradient descent, the main challenge is the back-propagation
through Model (\ref{model:fair_rank}), i.e., the final operation 
of our learnable ranking function. It is well-known that a parametric linear program with 
fixed constraints is a nonsmooth mapping from objective coefficients to 
optimal solutions. \emph{Nonetheless, effective approximations for the 
gradients of this mapping can be found~\cite{elmachtoub2020smart}}. 
Consider the optimal solution to a linear programming problem with 
fixed constraints, as a function of its cost vector $\hat{\bm{y}}$:
\begin{align*}
  % \label{eq:SPO_LP}
      \Pi^*(\hat{\bm{y}}) = \argmax_{\Pi} 
      &\;\; \hat{\bm{y}}^\top \Pi \\
      \texttt{s.t.} &\;\;  \bm{A} \Pi \leq b,
  \end{align*}
with $A$ and $b$ being an arbitrary matrix and vector, respectively. Given candidate costs $\hat{\bm{y}}$, the resulting optimal solution 
$\Pi^*(\hat{\bm{y}})$ can be evaluated relative to a known cost vector 
$\bm{y}$. Further, the resulting objective value can be compared 
to that of the optimal objective under the known cost vector using the regret
metric $\cL(\bm{y}, \hat{\bm{y}})$. 

The regret measures the loss in objective value, relative to the true cost
function, induced by the predicted cost. It is used as a loss function
by which the predicted linear program costs vectors can be supervised by
ground-truth values. 
However, the regret function is discontinuous with respect to $\hat{\bm{y}}$ for fixed $\bm{y}$. Following the approach 
pioneered in \cite{elmachtoub2020smart}, this paper uses a convex 
surrogate loss function, called the $\texttt{SPO+}$ loss, 
which forms a convex upper-bounding function over $\cL(\bm{y}, \hat{\bm{y}})$. 
Its gradient is computed as follows:
\begin{equation}
    \label{eq:SPO_regret_grad}
    \frac{\partial }{\partial \bm{y}} \cL(\bm{y}, \hat{\bm{y}}) \approx 
    \frac{\partial }{\partial \bm{y}} \mathcal{L}_{\texttt{SPO+}}(\bm{y}, \hat{\bm{y}}) 
    = \Pi^*(2\hat{\bm{y}} - \bm{y}) - \Pi^*(\bm{y}).
\end{equation}
Remarkably, risk bounds about the SPO+ loss relative to the SPO loss 
can be derived \cite{liu2021risk}, and the empirical minimizer of the SPO+  loss is shown to achieve low excess true risk with high probability. 
Note that, by definition, $\bm{y}^\top \Pi^*(\bm{y}) \geq  \bm{y}^\top \Pi^*(\hat{\bm{y}})$ 
and therefore $ \cL(\bm{y}, \hat{\bm{y}}) \geq 0 $. Hence, finding the $\hat{\bm{y}}$ minimizing $ \cL(\bm{y}, \hat{\bm{y}}) $ is equivalent to
finding the $\hat{\bm{y}}$ maximizing $ \bm{y}^\top \Pi^*(\hat{\bm{y}})$, since  $\bm{y}^\top \Pi^*(\bm{y})$ is a constant value.

\begin{algorithm}[!tb]
  \caption{Training the Fair Ranking Function}
  \label{alg:learning}
  \SetKwInOut{Input}{input}
  \Input{${D}, \alpha, \bm{w}:$ Training Data, Learning Rate, Position Discount.\!\!\!\!\!\!\!\!\!\!}
  \For{epoch $k = 0, 1, \ldots$} 
  { 
    \ForEach{$(\bm{x}, \bm{a}, \bm{y}) \!\gets\! {D}$ }
    {
        $\hat{\bm{y}} \gets \mathcal M_{\bm{\theta}}(\bm{x})$\\
        %\label{line:4}
        $\Pi_1 \gets \Pi^*(\bm{y}^\top \bm{w})$ by Model \ref{model:fair_rank}  \\
        $\Pi_2 \gets \Pi^*(2\hat{\bm{y}}^\top \bm{w} - \bm{y}^\top \bm{w})$ by Model \ref{model:fair_rank}   \\
        $\nabla {\mathcal L}(\bm{y}^\top \bm{w},  \hat{\bm{y}}^\top \bm{w}) \gets 
        \Pi_2 - \Pi_1 $\!\!\!\!\!\!\!\!\!\!\\
        \label{line:6}
        $\theta \gets \theta - \alpha \nabla {\mathcal L}(\bm{y}^\top \bm{w}, 
        \hat{\bm{y}}^\top \bm{y}) \frac{\partial \hat{\bm{y}}^\top \bm{w}}{\partial \theta}$
        \label{line:7}
    }
  }
\end{algorithm}

In the context of fair learning to rank, the goal is to predict the cost 
coefficients $\hat{\bm{y}}$ for Model \ref{model:fair_rank} which maximize the empirical DCG, equal to $\bm{y}^\top\, \Pi^*(\hat{\bm{y}})\, \bm{w}$
for  ground-truth relevance scores $\bm{y}$. A vectorized form can be written: 
\begin{equation}
    \bm{y}^\top \Pi \bm{w} = \overrightarrow{(\bm{y}^\top \bm{w})} \cdot \overrightarrow{\Pi},
\end{equation}
where $\overrightarrow{A}$ represents the row-major-order vectorization 
of a matrix $A$. 
Hence, the regret induced by prediction of cost coefficients $\hat{\bm{y}}$ is
\begin{equation}
    %\label{}
    \cL(\bm{y}, \hat{\bm{y}}) = 
   \overrightarrow{(\bm{y}^\top \bm{w})} \cdot \overrightarrow{\Pi^*(\bm{y})}  
   -  \overrightarrow{(\bm{y}^\top \bm{w})} \cdot \overrightarrow{\Pi^*(\hat{\bm{y}})}.
\end{equation} 
Note that while the cost coefficients $\bm{y}$ can be predicted generically 
(i.e., predicting an $n^2$-sized matrix), 
the modeling approach taken in this paper is to predict item scores
 independently from individual feature vectors (resulting in an $n$-sized vector). 
 These values combine naturally with the known position bias values $\bm{v}$, to estimate DCG
 in the absence of true item scores. This simplification allows for
 learning independently over individual feature vectors, and was
 found in practice to outperform frameworks which use larger networks
  which take as input the entire feature vector lists. 

Algorithm \ref{alg:learning} maximizes the expected DCG of a learned 
ranking function by minimizing this regret. Its gradient is approximated as
\begin{equation}
    %\label{}
    \overrightarrow{\Pi^*(2\hat{\bm{y}}^\top \bm{w} - \bm{y}^\top \bm{w})}  - 
     \overrightarrow{\Pi^*(\bm{y}^\top \bm{w})},
\end{equation}
with $\hat{\bm{y}}$ predicted as described in Section \ref{sec:predict}. 
To complete the calculation of gradients for the fair ranking model, 
the remaining chain rule factor of line \ref{line:7} 
is completed using the typical automatic differentiation.

\section{Multigroup Fairness}

SPOFR generalizes naturally to more than two groups. 
In contrast, multi-group fairness raises challenges for existing approaches that rely on penalty terms in the loss function \cite{yadav2021policy,singh2019policy,zehlike2020reducing}. 
Reference \cite{yadav2021policy} proposes to formulate multi-group fairness using the single constraint
\begin{equation}
\label{eq:fairness_one_group_mean_yadav}
    \sum_{g \neq g'}  \Big|\Big(\frac{1}{|G^{g}_q|}\mathbbm{1}_{G^{g}_q} 
                                - \frac{1}{|G^{g'}_q|}\mathbbm{1}_{G^{g'}_q}\Big)^\top 
              \; \Pi \; \bm{v} \Big| \leq \delta \;\;\;
\end{equation}
where $g$ and $g'$ are groups indicators in $\cG$, so that the average pairwise disparity between groups is constrained. However, this formulation suffers when $\delta \geq 0$, because the allowed fairness gap can be occupied by disparities associated with a single group in the worst case. Multiple constraints are required to provide true multi-group fairness guarantees and allow a controllable tradeoff between muti-group fairness and utility. Furthermore, the constraints~\eqref{eq:fairness_one_group} ensure satisfaction of \eqref{eq:fairness_one_group_mean_yadav} for appropriately chosen
$\delta$ and are thus a generalization of~\eqref{eq:fairness_one_group_mean_yadav}. If unequal group disparities are desired, $\delta$ may naturally be chosen differently for each group in the equations~\eqref{eq:fairness_one_group}.

\section{Experiments}
\label{sec:experiments}

This section evaluates the performance of SPOFR against the prior
approaches of \cite{yadav2021policy} and \cite{zehlike2020reducing}, 
the current state-of-the-art methods for fair learning rank, 
which are denoted by FULTR and DELTR respectively. The experimental evaluation follows the more realistic \emph{Partial 
Information setting} described in \cite{yadav2021policy}. A formal description of this setting is deferred to Appendix \ref{sec:partial_info}.

\subsubsection*{\bf Datasets}

Two full-information datasets were used in \cite{yadav2021policy} to generate partial-information  counterparts using click simulation:
\begin{itemize}[leftmargin=*, parsep=0pt, itemsep=2pt, topsep=2pt]
\item 
\emph{German Credit Data} is a dataset commonly used for studying algorithmic fairness. It contains information about $1000$ loan 
applicants, each described by a set of attributes and labeled as creditworthy 
or non-creditworthy. Two groups are defined by the binary feature A43, indicating the purpose of the loan applicant. %is radio/television. 
The ratio of applicants between the two groups is around $8:2$.

\item
\emph{Microsoft Learn to Rank (MSLR)} is a standard benchmark dataset for LTR, containing a large number of queries from Bing with manually-judged relevance labels for retrieved web pages. The QualityScore attribute (feature id $133$) is used to define binary protected groups using the $40^{th}$ percentile as threshold as in \cite{yadav2021policy}. For evaluating fairness between $k > 2$ groups (multi-group fairness), $k$ evenly-spaced quantiles define the thresholds.
\end{itemize}

\noindent
Following the experimental settings of \cite
{yadav2021policy,singh2019policy}, all datasets are constructed to
contain item lists of size $20$ for each query. The German Credit and
MSLR training sets consist of $100$k and $120$k sample queries,
respectively while full-information test sets consist of $1500$ and
$4000$ samples. Additionally, as reported in Appendix \ref{app:results}, 
much smaller training sets result in analogous performance.

The reader is referred to \cite{yadav2021policy} for details of the
click simulation used to produce the training and validation sets.

\subsubsection*{\bf Models and Hyperparameters} The prediction of item
 scores is the same for each model, with a single neural network
 which acts at the level of individual feature vectors as described
 in Section \ref{sec:predict}. The size of each layer is half that of the previous, and the output is a scalar value representing an item score. Hyperparameters were selected as
 the best-performing on average among those listed in Table \ref
 {tab:hyperparams}, Appendix \ref{app:hyper}). {Final hyperparameters 
 for each model are as stated also in Table \ref{tab:hyperparams}, 
 and Adam optimizer is used in the production of each result.}

The special \emph{fairness parameters}, while also hyperparameters,
are treated differently. Recall that fair LTR systems often aim to
offer a tradeoff between utility and group fairness, so that fairness
can be reduced by an acceptable tolerance in exchange
for increased utility. For the baseline methods FULTR and DELTR, this
tradeoff is controlled indirectly through the constraint violation
penalty term denoted $\lambda$, as described in Section \ref
{section:learning_fair_ranking}. Higher values of $\lambda$
correspond to a preference for stronger adherence to fairness. In
order to achieve $\delta$-fairness for some specified $\delta$, many
values of $\lambda$ must be searched until a trained model satisfying
$\delta$-fairness is found. As described in Section \ref
{section:learning_fair_ranking}, this approach is unwieldly. In the
case of SPOFR, the acceptable violation magnitude $\delta$ can be
directly specified as in Definition (\ref{def:delta_fairness}).

\medskip\noindent
The performance of each method is reported on the full-information test set for which all relevance labels are known.
 Ranking utility and fairness are measured with average DCG 
 (Equation ~\eqref{eq:policy_utility}) and fairness violation (Equation~\eqref
 {eq:fairness_violation}), where each metric is computed on average
 over the entire dataset. We set the position bias power $p=1$, so that 
 $v_j = \nicefrac{1}{(1+j)}$ when computing fairness disparity.

\begin{figure}[!tb]
\centering
\includegraphics[width=0.498\linewidth]{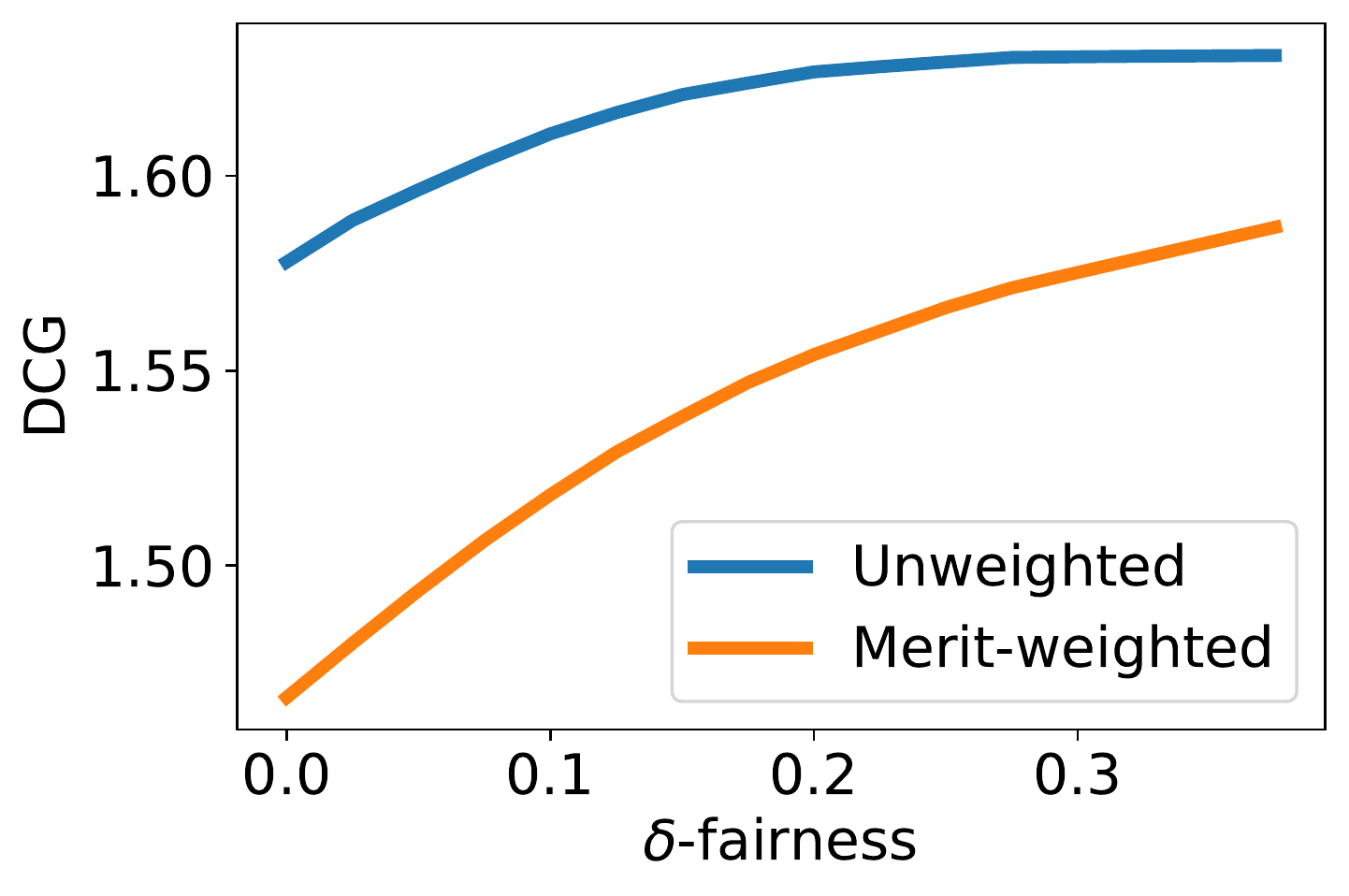}\!\!\!
\includegraphics[width=0.498\linewidth]{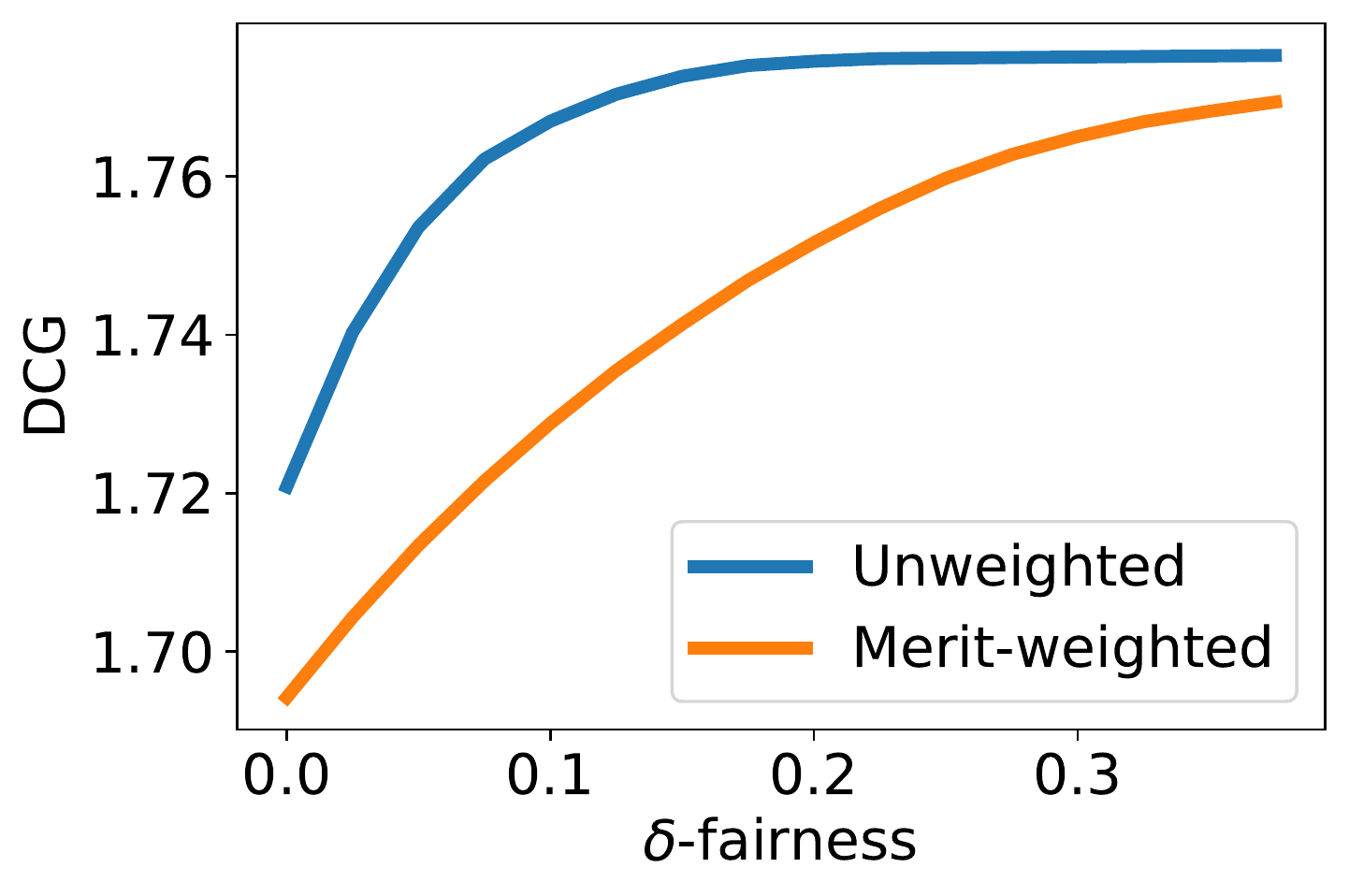} 
\caption{Fairness-Utility tradeoff for unweighted and merit-weighted fairness
on credit (left) and MSLR (right) datasets.}
\label{fig:spo_dcg_fairness_disptype}
\end{figure}

\subsubsection*{\bf Fairness-Utility Tradeoff for Two Groups}
The analysis first focuses on experiments involving two protected groups. 
Figure \ref{fig:spo_dcg_fairness_disptype} shows the average test
DCG attained by SPOFR on both the German Credit and MSLR datasets,
for each level of both unweighted and merit-weighted
$\delta$-fairness as input to the model.  Each result comes from a
model trained with final hyperparameters as shown in Table \ref{tab:hyperparams}. 
Recall that each value of $\delta$ (defined as in Definition 
\ref{def:delta_fairness}) on the x-axis is guaranteed to bound the 
ranking policy's expected fairness violation in response to \emph{each} query. Note the clear trend showing an increase in utility with the relaxation of the fairness bound $\delta$, for all metrics and datasets. Note also that, in the datasets studied here, average merit favors 
the majority group. Merit-weighted group fairness can thus constrain 
the ranking positions of the minority group items further down than 
in unweighted fairness, regardless of their individual relevance scores, 
leading to more restricted (thus with lower utility) policies than in the unweighted case. 

In addition to results over large training sets, reported in the figure, Appendix \ref{app:additional_experiments} reports results for the model trained on datasets with $5K$ and $50k$ samples for German Credit, and $12K$ and $36K$ samples, for MSLR dataset. These results show the same trends as those reported above.

\begin{figure}[!tb]
\centering
\includegraphics[width=0.6\linewidth]{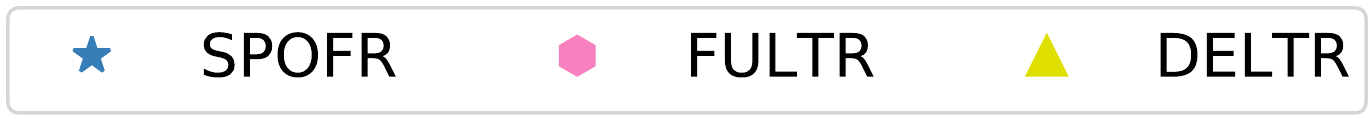}
\includegraphics[width=0.495\linewidth]{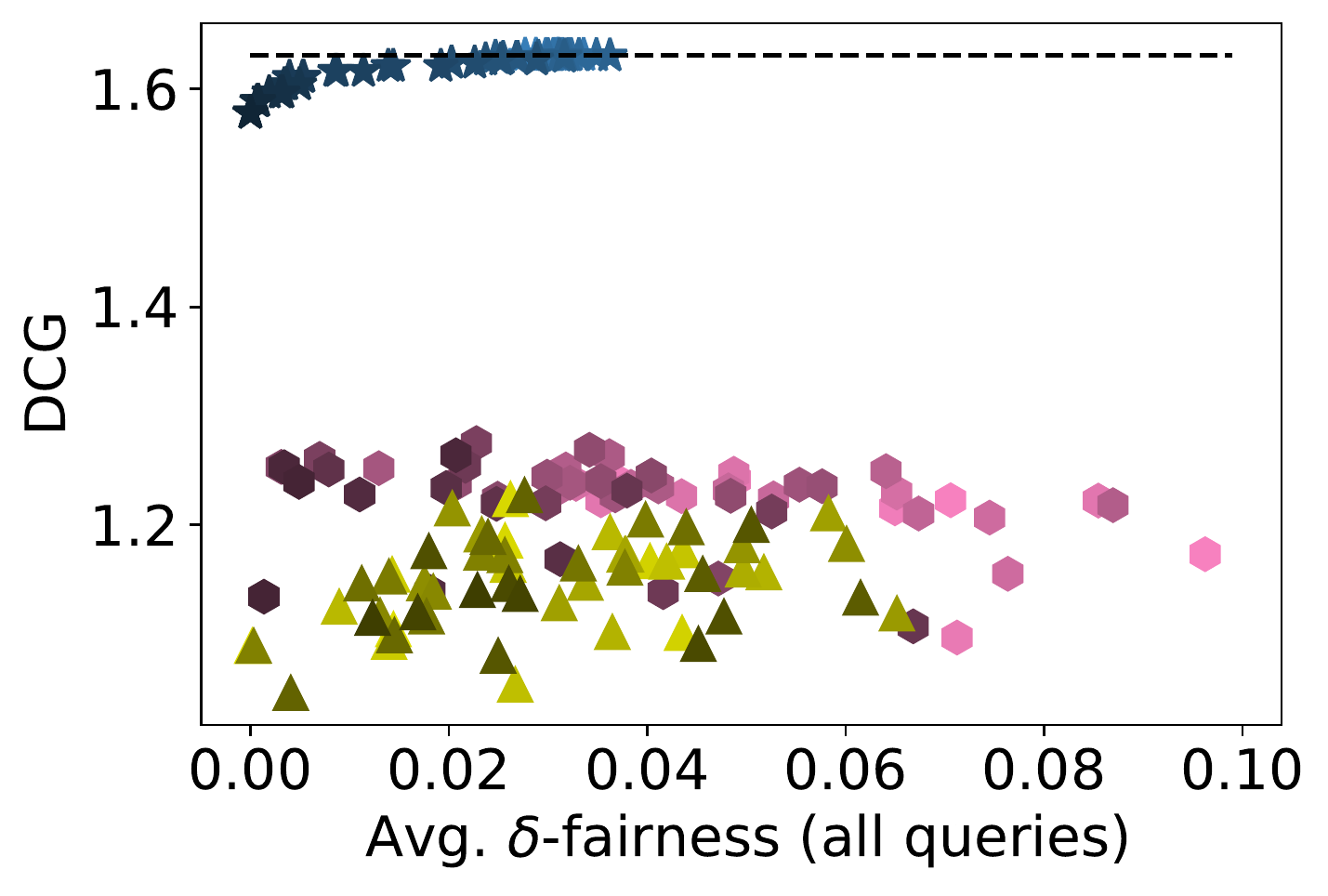} 
\includegraphics[width=0.495\linewidth]{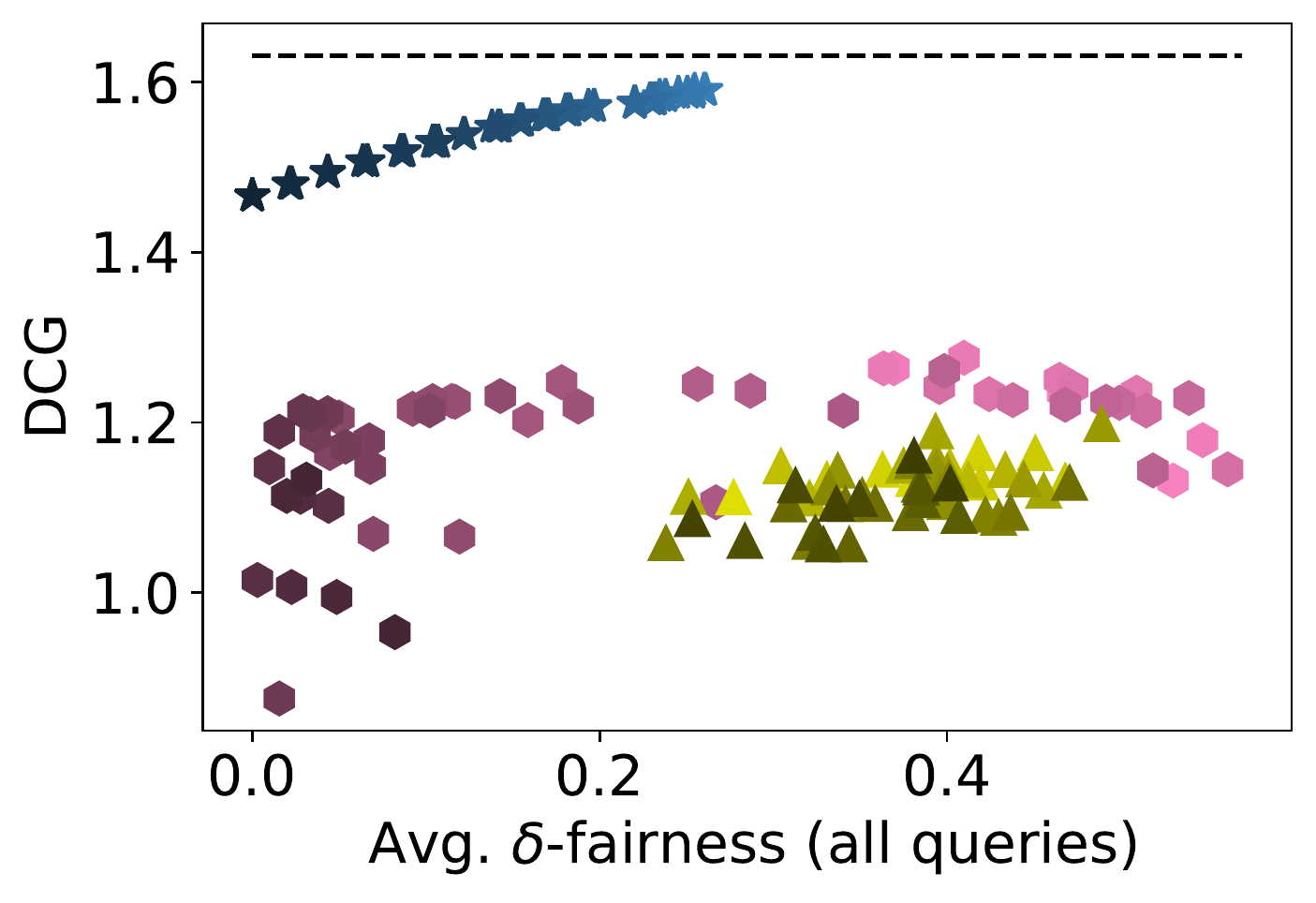} \\
\includegraphics[width=0.495\linewidth]{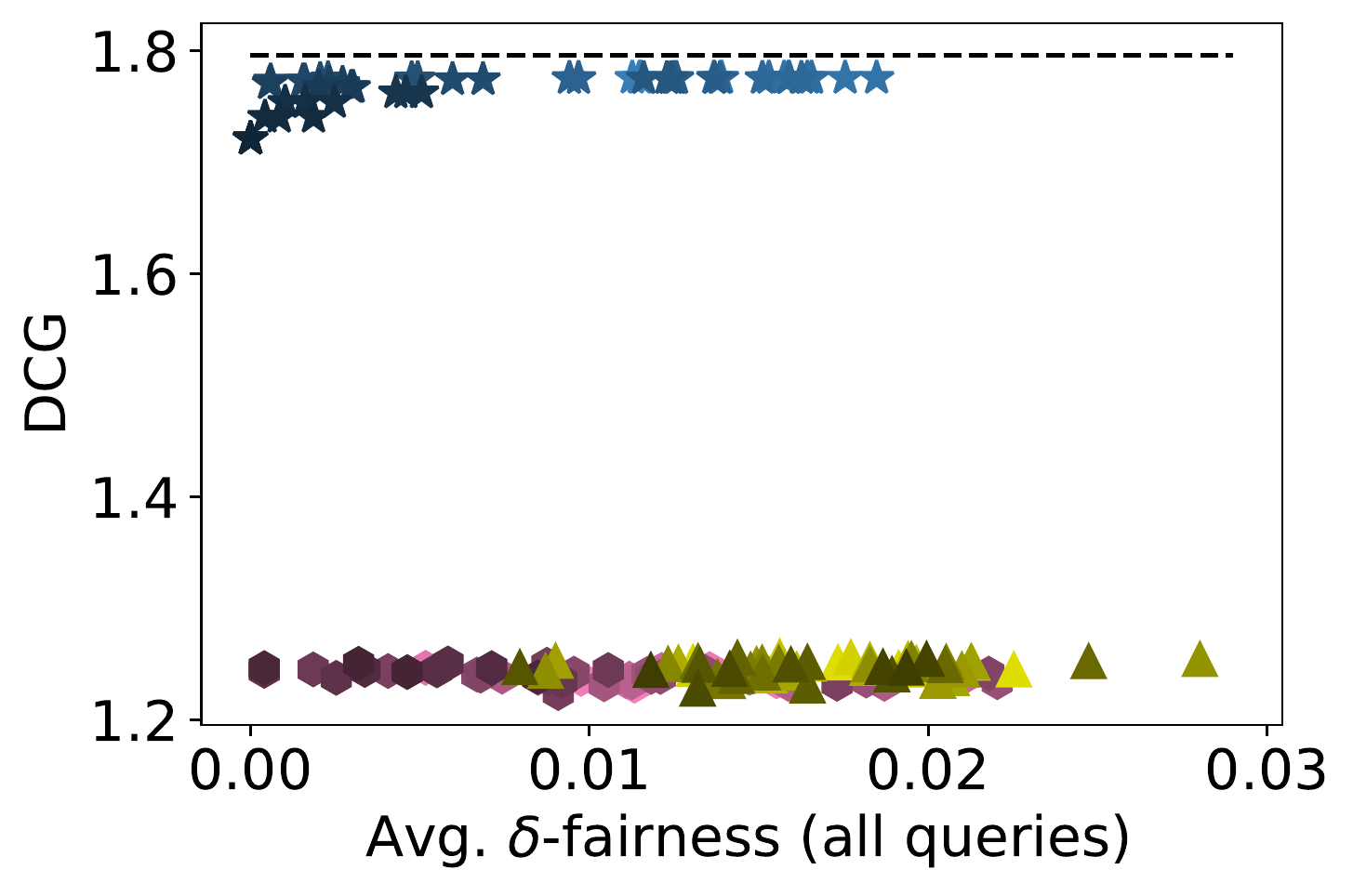} 
\includegraphics[width=0.495\linewidth]{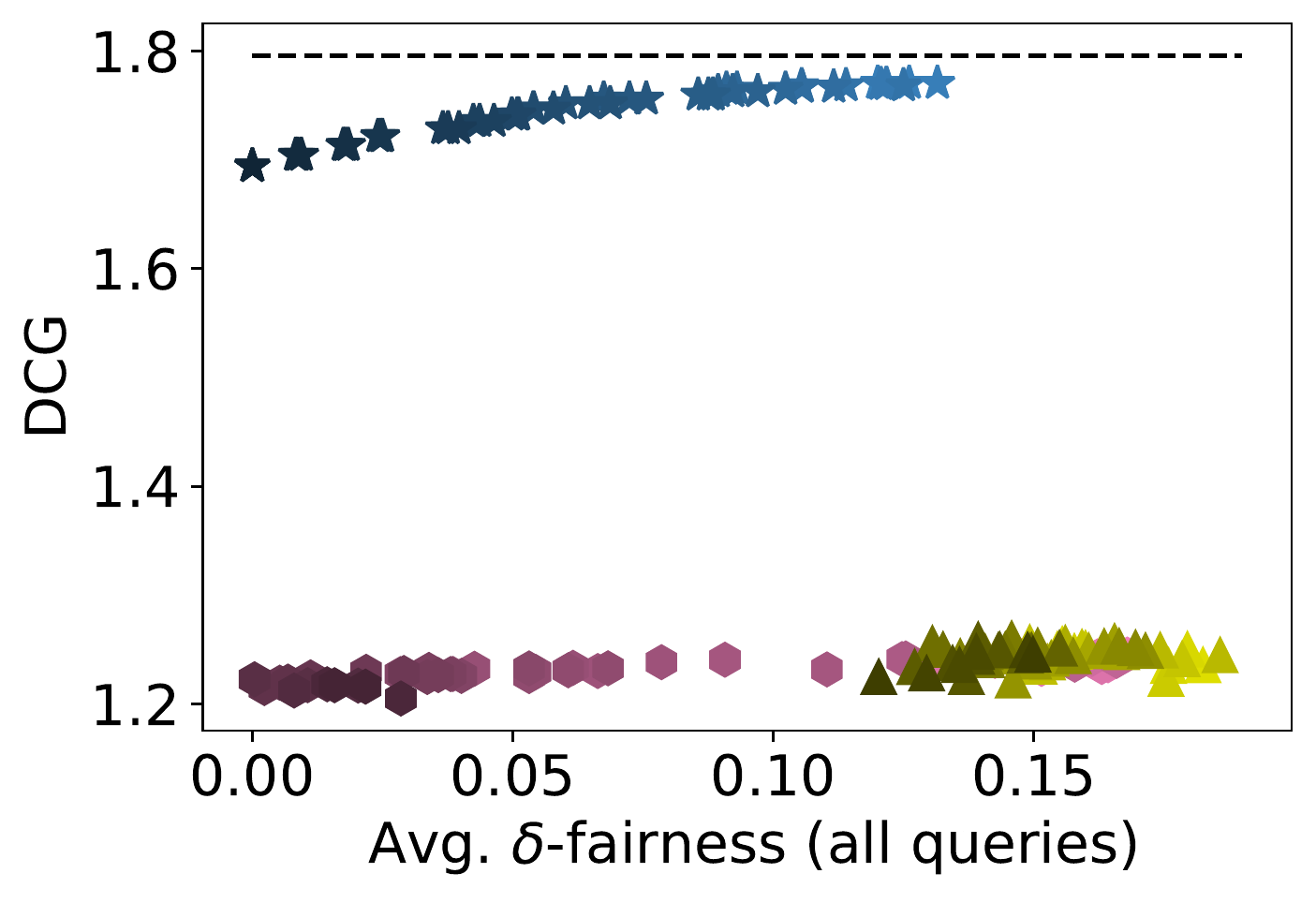} 
\caption{Fairness-Utility tradeoff for unweighted (left) and merit-weighted (right) fairness 
         on credit (top) and MSLR (bottom) datasets.}\label{fig:grid_searches}
\end{figure}
\subsubsection*{\bf Fairness Parameter Search} Figure \ref
 {fig:grid_searches} shows the average DCG vs average fairness
 disparity over the \emph{test set} due to SPOFR, and compares it with 
 those attained by FULTR and DELTR. 
 Each point represents the performance of a single trained
 model, taken from a grid-search over \emph{fairness parameters}
 $\delta$ (for SPOFR) and $\lambda$ (for FULTR and DELTR) between the
 minimum and maximum values in Table \ref{tab:hyperparams}. Darker
 colors represent more restrictive fairness parameters in each case.
 Non-fairness hyperparameters take on the final values shown also in
 Table \ref{tab:hyperparams}, and each model specification is
 repeated with $3$ random seeds. 
 Note that points on the grid which are lower on the x-axis and higher 
 on the y-axis represent results which are strictly superior relative 
 to others, as they represent a larger utility for smaller fairness 
 violations. 
 Dashed lines represent the maximum utility attainable in each case, computed by averaging  over the test set the maximum possible DCG associated to each relevance vector.

 First, we observe that the expected fairness violations due to SPOFR 
 are much lower than the fairness levels guaranteed by the listed 
 fairness parameters $\delta$. This is because $\delta$ is a bound 
 on the worst-case violation of fairness associated with any query, but 
 actual resulting fairness disparities are typically much lower on 
 average. 

 Second, the figure shows that SPOFR attains a substantial 
 improvement in utility over the baselines, while exhibiting more 
 consistent results across independently trained models. Note the dashed line represents the theoretical maximum attainable utility; remarkably, as the fairness parameter is relaxed, the DCG attained by SPOFR converges very close to this value.
 Section \ref{sec:theory} provides theoretical motivation to explain 
 these marked improvements in performance.

 Finally, notice that for FULTR and DELTR, large $\lambda$ values 
 (darker colors) should be associated with smaller fairness violations,
 compared to models trained with smaller $\lambda$ values (lighter colors).
 However, this trend is not consistently observable: These results show 
 the challenge to attain a meaningful relationship between the fairness 
 penalizers $\lambda$ and the fairness violations in these state-of-the-art
 fair LTR methods. A similar observation also pertains to utility; It is expected
 that more permissive models in terms of fairness would attain larger 
 utilities; this trend is not consistent in the FULTR and DELTR models.
 In contrast, the ability of the models learned by SPOFR to \emph{guarantee}
 satisfying the desired fairness violation equip the resulting LTR models 
 with much more interpretable and consistent outcomes.

\begin{figure*}[t]
\centering
\includegraphics[width=0.95\linewidth]{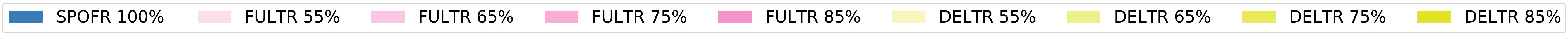}\\
\includegraphics[width=0.24\linewidth]{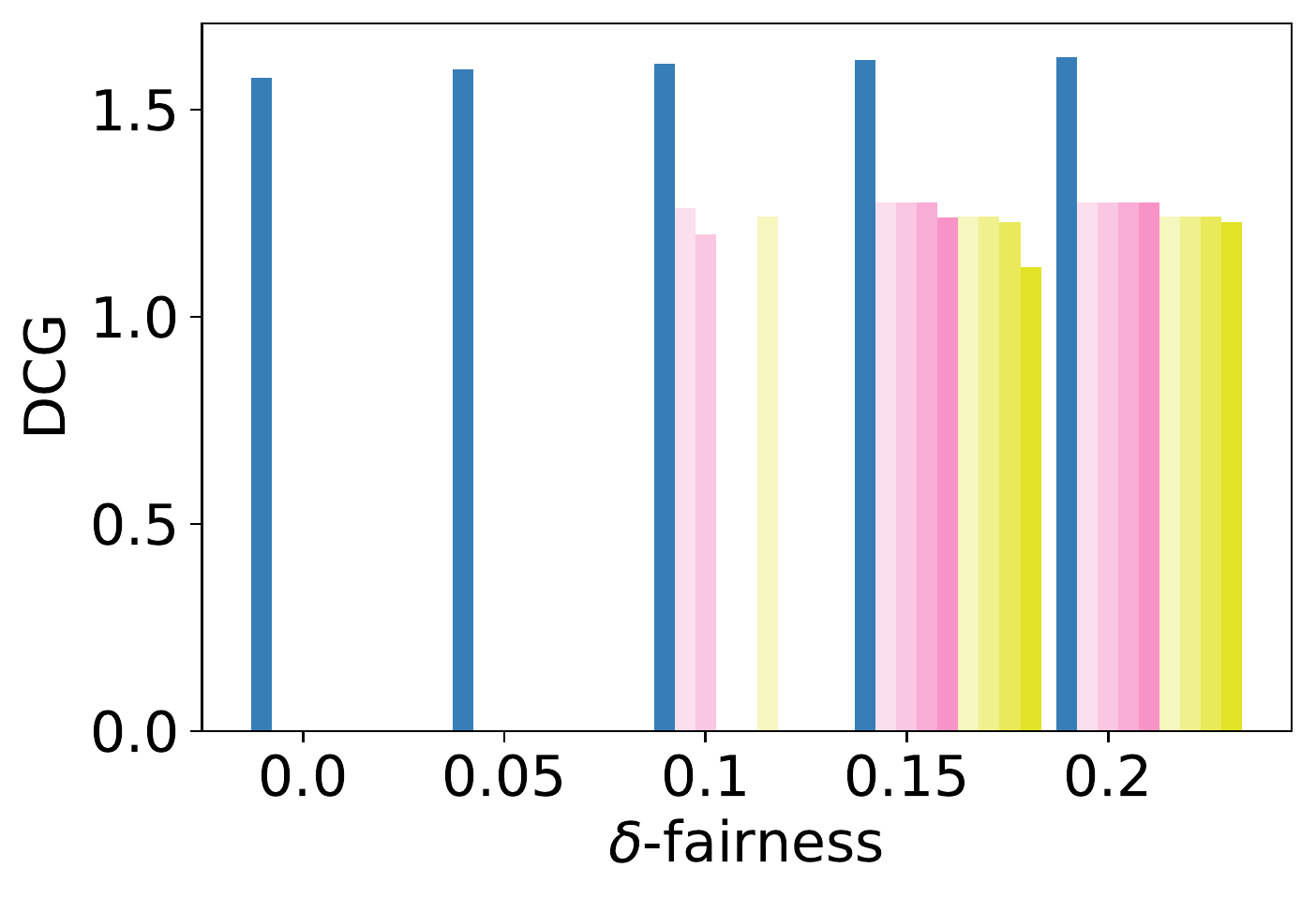}
\includegraphics[width=0.24\linewidth]{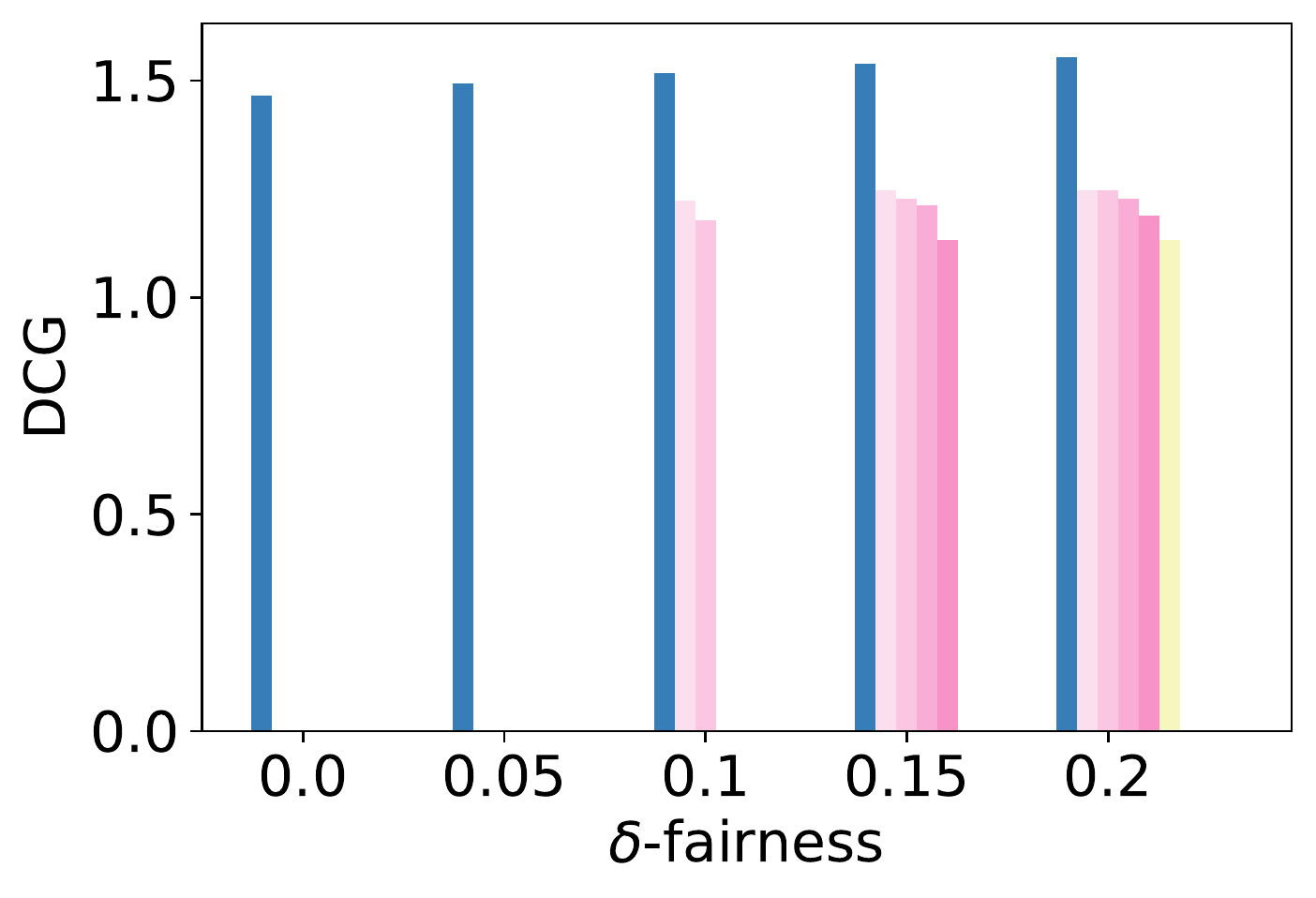}
\includegraphics[width=0.24\linewidth]{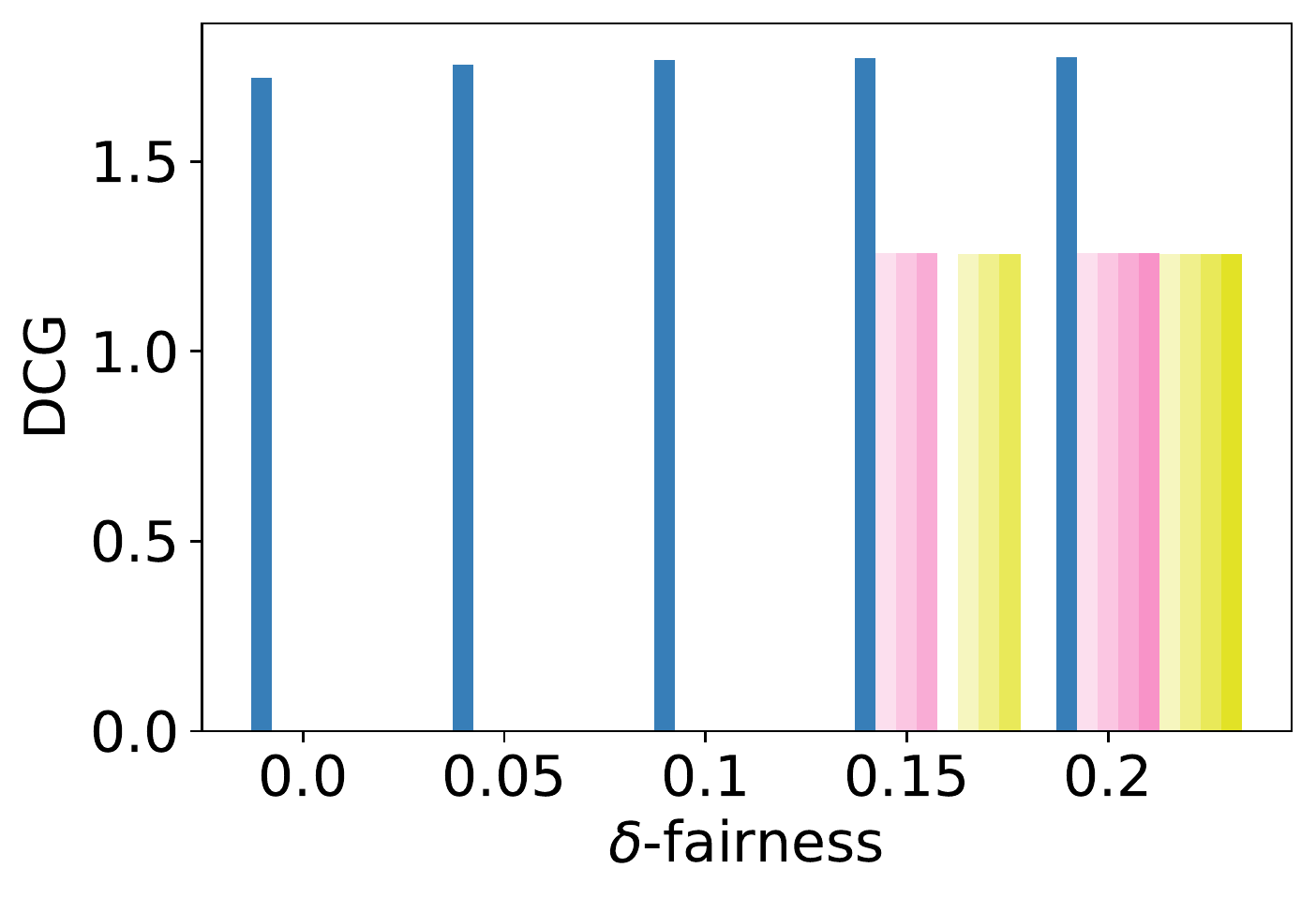}
\includegraphics[width=0.24\linewidth]{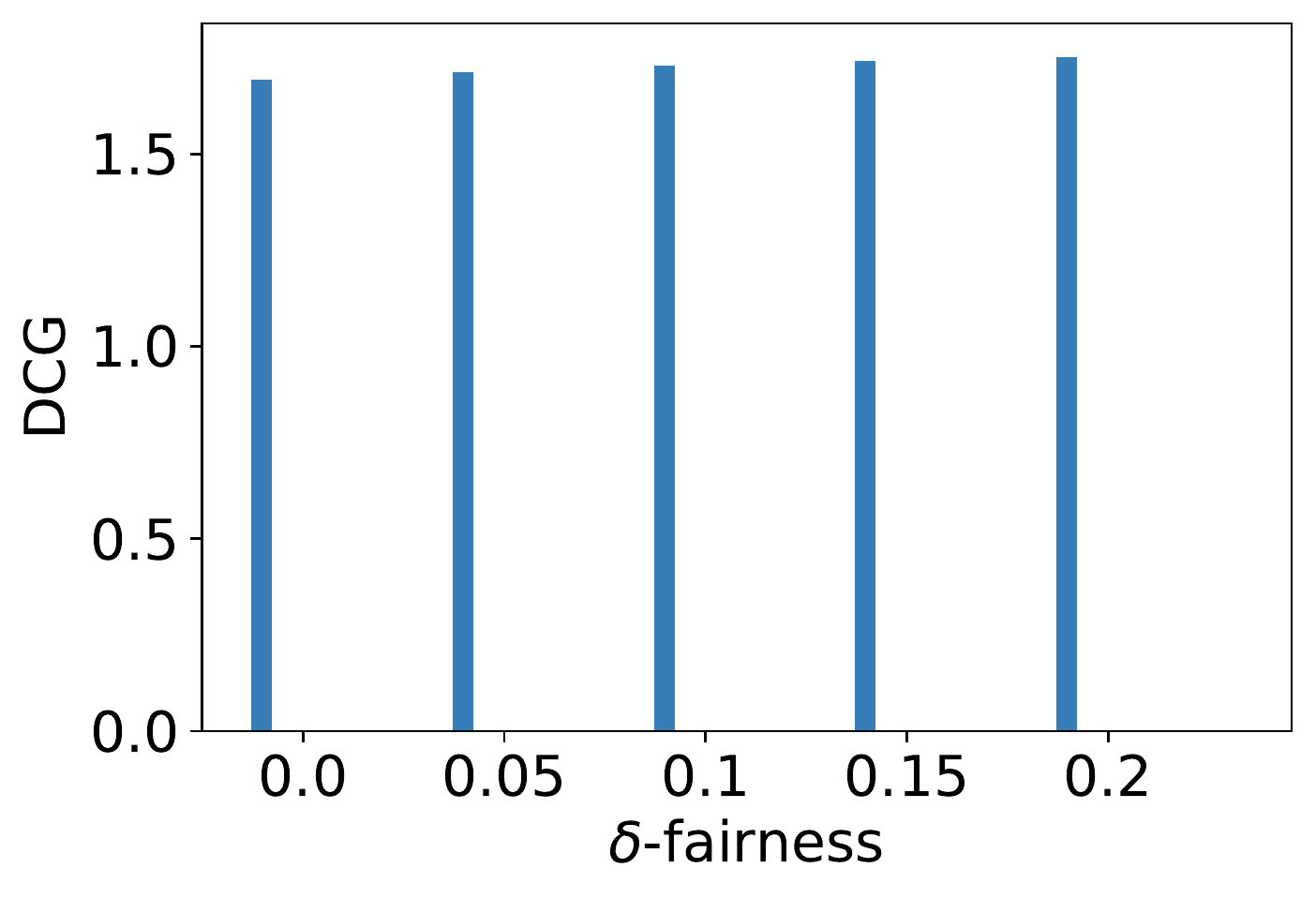}
\caption{Query level guarantees: German credit unweighted ($1^{\text{st}}$ column)
                    and merit-weighted fairness ($2^{\text{nd}}$ column);  
                    MSLR unweighted ($3^{\text{rd}}$ column) and 
                    merit-weighted fairness ($4^{\text{th}}$ column).}
\label{fig:query_level}
\end{figure*}

\subsubsection*{\bf Query-level Guarantees} 
As discussed in Section \ref{section:learning_fair_ranking}, 
current fair LTR methods apply a fairness violation term on 
average over all training samples. Thus, disparities in favor of one group
can cancel out those in favor of another group leading to \emph{individual} 
policies that may not satisfy a desired fairness level.
This section illustrates on these behaviors and analyzes the fairness 
guarantees attained by each model compared at the level of each 
individual query. 

The results are summarized in Figure \ref{fig:query_level} which compares, 
SPOFR with FULTR (top) and SPOFR with DELTR (bottom).
 Each bar represents the maximum expected DCG attained by a LTR model
 while guaranteeing $\delta$-fairness at the query level for
 $\delta$ as shown on the x-axis. Since neither baseline method can
 satisfy $\delta$-fairness for every query, confidence levels are
 shown which correspond to the percentage of queries within the test
 set that resulted in ranking policies that satisfy $delta$-fairness.
 If no bar is shown at some fairness level, it was satisfied by no
 model at the given confidence level. 
 Results were drawn from the same set of models analyzed in the previous section. 
 
 Notably, SPOFR satisfies $\delta$-fairness with $100$ percent confidence
 while also surpassing the baseline methods in terms of expected
 utility. This is remarkable and is due partly to the fact that the 
 baseline methods can only be specified to optimize for fairness \emph{on average}
 over all queries, which accomodates large query-level fairness
 disparities when they are balanced in opposite directions; i.e., in
 favor of opposite groups. In contrast SPOFR guarantees the specified 
 fairness violation to be attained for ranking policies associated with
 each individual query.

\subsubsection*{\bf Multi-group Fairness} Finally, Figure \ref
 {fig:multigroup} shows the fairness-utility tradeoff curves attained
 by SPOFR for each number of groups between $2$ and $7$ on the MSLR
 dataset. 
 Note the decrease in expected DCG as the number of
 groups increases. This is not necessarily due to a degradation in
 predictive capability from SPOFR; the expected utility of any
 ranking policy necessarily decreases as fairness constraints are
 added. In fact, the expected utility converges for each multi-group
 model as the allowed fairness gap increases. Because this strict
 notion of multi-group fairness in LTR is uniquely possible using
 SPOFR, no results from prior approaches are available for direct
 comparison.

\begin{figure}
\includegraphics[width=0.65\linewidth]{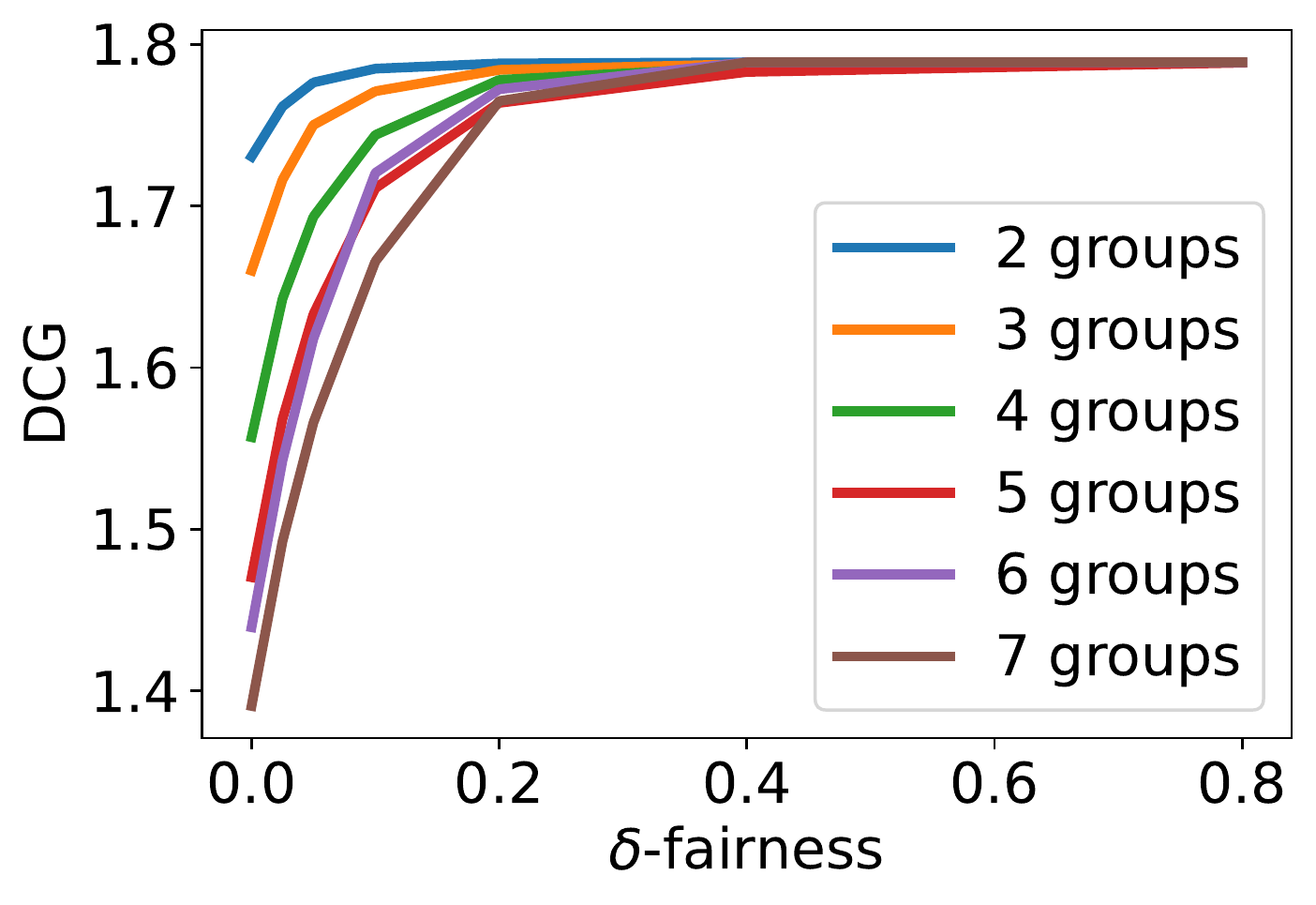} 
\caption{Multigroup Fairness on MSLR 120k.}
\label{fig:multigroup}
\end{figure}

\section{Discussion}
\label{sec:theory}

\subsubsection*{\bf Theoretical Remarks}
This section provides theoretical intuitions to explain the strong
performance of SPOFR. As direct outputs of a linear
programming solver, the ranking policy returned by SPOFR are 
subject to the properties of LP optimal solutions. This allows for
certain insights on the representative capacity of the ranking
model and on the properties of its resulting ranking policies. 
Let predicted scores be said to be \emph
{regret-optimal} if their resulting policy induces zero regret, i.e, $\bm{y}^\top \Pi^*(\bm{y}) \; \bm{v} - \bm
{y}^\top \Pi^*(\hat{\bm{y}})\; \bm{w} = 0$. That is equivalent to
the maximization of the empirical utility. 

\begin{theorem}[Optimal Policy Prediction]
\label{prop:LP_opt_policy} For any given ground-truth relevance scores
 $\bm{y}$, there exist predicted item scores which maximize the empirical utility relative to $\bm{y}$.
\end{theorem}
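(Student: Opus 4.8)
The plan is to produce an explicit witness. First I would observe that, unwinding the regret loss \eqref{eq:dcg_regret} together with the definition of $\Pi^*$ as the maximizer of $\bm{y}^\top\Pi\bm{w}$ over the polytope \eqref{eq:6b}--\eqref{eq:6e} (Equation~\eqref{eq:6a}), a prediction $\hat{\bm{y}}$ makes its induced policy attain the maximal empirical utility $\bm{y}^\top\Pi\bm{w}$ relative to $\bm{y}$ if and only if $\cL(\bm{y},\hat{\bm{y}}) = 0$, that is, if and only if $\hat{\bm{y}}$ is regret-optimal. So it suffices to exhibit one regret-optimal prediction, and the obvious candidate is $\hat{\bm{y}} = \bm{y}$: with this choice the linear program of Model~\ref{model:fair_rank} defining $\Pi^*(\hat{\bm{y}})$ is \emph{identical} to the one defining $\Pi^*(\bm{y})$ --- same feasible set, same objective $\hat{\bm{y}}^\top\Pi\bm{w} = \bm{y}^\top\Pi\bm{w}$ --- hence $\Pi^*(\hat{\bm{y}}) = \Pi^*(\bm{y})$ and $\cL(\bm{y},\hat{\bm{y}}) = \bm{y}^\top\Pi^*(\bm{y})\bm{w} - \bm{y}^\top\Pi^*(\hat{\bm{y}})\bm{w} = 0$.

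The only place where anything beyond this tautology enters is the modeling restriction that SPOFR predicts an $n$-vector $\hat{\bm{y}}$, which feeds the decision program only through the rank-one cost $\hat{\bm{y}}\bm{w}^\top$ (since $\hat{\bm{y}}^\top\Pi\bm{w} = \langle \hat{\bm{y}}\bm{w}^\top, \Pi\rangle$). One might worry that this is strictly less expressive than predicting a general $n^2$ cost matrix, but the \emph{true} DCG cost $\bm{y}^\top\Pi\bm{w} = \langle \bm{y}\bm{w}^\top, \Pi\rangle$ is itself rank one, so the item-level parametrization already contains the exact objective of interest; the witness $\hat{\bm{y}} = \bm{y}$ therefore lies inside the SPOFR hypothesis class and no generality is lost. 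As a strengthening I would also note that every $\hat{\bm{y}}$ for which the rank-one cost $\hat{\bm{y}}\bm{w}^\top$ lies in the normal cone of the feasible polytope at the vertex $\Pi^*(\bm{y})$ is equally regret-optimal, so the target set of predictions is in fact a full cone --- which is what makes the downstream learning problem well posed.

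Two points require care rather than real effort, and I do not anticipate a genuine obstacle. First, $\Pi^*(\bm{y})$ must exist, i.e.\ the feasible region of Model~\ref{model:fair_rank} must be nonempty for every $\delta \geq 0$; this is immediate because the uniform matrix $\Pi = \tfrac{1}{n}\mathbbm{1}\mathbbm{1}^\top$ satisfies Constraints~\eqref{eq:6b}--\eqref{eq:6d} and makes $\Pi\bm{v}$ a scalar multiple of $\mathbbm{1}$, so the left-hand side of \eqref{eq:fairness_one_group} vanishes since $(\tfrac{1}{|G_q^g|}\mathbbm{1}_g - \tfrac{1}{n}\mathbbm{1})^\top\mathbbm{1} = 1 - 1 = 0$ (an analogous identity handles the merit-weighted form \eqref{eq:fairness_one_group_weighted}). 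Second, Model~\ref{model:fair_rank} may admit several optimal vertices, so $\Pi^*(\cdot)$ should be read as a fixed selection rule of the solver; this does not affect the argument, since the same input $\bm{y}$ produces the same selection twice and $\cL(\bm{y},\bm{y}) = 0$ exactly regardless of tie-breaking. The ``hard part'' here is thus essentially presentational: making precise that the claim is an attainability statement about the regret, and that it is the rank-one structure of the DCG cost that lets the item-level prediction realize it.
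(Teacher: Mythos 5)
Your proposal is correct and takes essentially the same route as the paper, whose entire proof is the witness $\hat{\bm{y}} = \bm{y}$: identical objective, identical feasible set, hence zero regret and maximal empirical utility; your additional remarks on the rank-one cost structure and tie-breaking only elaborate this. One small caveat: your feasibility aside is fine for the unweighted constraint, but the claimed ``analogous identity'' for the merit-weighted form does not hold, since $\left(\frac{\mu}{|G_q^g|}\mathbbm{1}_g - \frac{\mu_g}{n}\mathbbm{1}\right)^\top \mathbbm{1} = \mu - \mu_g \neq 0$ in general, so the uniform policy need not be feasible there for small $\delta$ --- though this side point is not needed for the theorem, which (like the paper) implicitly assumes the program is feasible.
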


\begin{proof}
\label{proof:LP_opt_policy} It suffices to predict $\hat{\bm{y}} = \bm
 {y}$. These scores are regret-optimal by definition, thus maximizing
 empirical utility. 
\end{proof}
Note that the above property is due to the alignment between the
structured policy prediction of SPOFR and the evaluation
metrics, and is not shared by prior fair learning to rank
frameworks. 

Next, recall that any linear programming problem has a finite number
of feasible solutions whose objective values are distinct 
\cite{bazaraa2008linear}. {\em There is thus not
only a single point but a region of $\hat{\bm{y}}$ which are
regret-optimal under $\bm{y}$, with respect to any instance of Model 
\ref{model:fair_rank}}. This important property eases the difficulty in finding
model parameters which maximize the empirical utility for any input
sample, as item scores \emph{do not} need to be predicted precisely
in order to do so.

Finally, the set of $\hat{\bm{y}}$ which minimize the regret with respect to
any instance of Model \ref{model:fair_rank} overlaps (has nonempty
intersection) with the set of $\hat{\bm{y}}$ which minimize the regret with respect to any other instance of the model, regardless of fairness 
constraints, under the same ground-truth $\bm{y}$. To show this, it suffices to exhibit a value of $\hat{\bm{y}}$ which minimizes the respective regret
in every possible instance of Model~\ref{model:fair_rank}, namely
$\bm{y}$:
\begin{equation}
    \bm{y}^\top \Pi_{f_1}^*(\bm{y}) \bm{w}  -  \bm{y}^\top \Pi_{f_1}^*(\bm{y}) \bm{w} 
    = 0 =  
    \bm{y}^\top \Pi_{f_2}^*(\bm{y}) \bm{w}  -  \bm{y}^\top \Pi_{f_2}^*(\bm{y}) \bm{w},
\end{equation}
where $\Pi_{f_1}^*(y)$ and $\Pi_{f_2}^*(y)$ are the optimal policies
subject to distinct fairness constraints $f_1$ and $f_2$. This
implies that a model which learns to rank fairly under this framework
need not account for the group composition of item lists in order to
maximize empircal utility; {\em It suffices to learn item scores from
independent feature vectors, rather than learn the higher-level
semantics of feature vector lists required to enforce fairness}. This
is because group fairness is ensured automatically by the embedded
optimization model. This aspect is evident in Figure~\ref{fig:grid_searches}
showing that SPOFR exhibits small losses in utility even in restrictive 
fairness settings. 

The empirical results presented in the previous section, additionally, 
show that the utility attained by SPOFR is close to optimal on the 
test cases analyzed. 
Together with the theoretical observations above, this suggests that, on 
the test cases analyzed, fairness does not change drastically the objective 
of the optimal ranking policy. This may be an artifact of the LTR tasks, obtained from \cite{yadav2021policy}, being relatively easy predict given a sufficiently powerful model. These observation may signal a need for the study and curation of more challenging benchmark datasets for future research on fairness in learning to rank.

\subsubsection*{\bf SPOFR Limitations}
The primary disadvantage of SPOFR is that it cannot be expected to learn to rank lists of arbitrary size, as runtime increases with the size of the lists to be ranked. In contrast to penalty-based methods, which require a single linear pass 
to the neural network to derive a ranking policy for a given query, 
SPOFR requires solving a linear programming problem to attain an optimal 
ranking policy. While this is inevitably computationally more expensive,
solving the LP of Model \ref{model:fair_rank} requires low degree 
polynomial time in the number of items to rank \cite{van2020deterministic},
due to the sparsity of its constraints. Fortunately, this issue can be vastly alleviated with the application of hot-starting schemes \cite{mandi2019smart}, since the SPO framework relies on iteratively updating a stored solution to each LP instance for slightly different objective coefficients as model weights are updated. \emph{Thus, each instance of Model \ref{model:fair_rank} need not be solved from scratch.}
Appendix \ref{app:efficiency} reports a detailed discussion on the steps taken in this work to render the proposed model both computationally and memory efficient.

\section{Conclusions}

This paper has described SPOFR, a framework for learning fair ranking functions by integrating constrained optimization with deep learning. By enforcing fairness constraints on its ranking policies at the level of each prediction, this approach provides direct control over the allowed disparity between groups, which is guaranteed to hold for every user query. Since the framework naturally accommodates the imposition of many constraints, it generalizes to multigroup fair LTR settings without substantial degradation in performance, while allowing for stronger notions of multigroup fairness than previously possible. Further, it has been shown to outperform previous approaches in terms of both the expected fairness and utility of its learned ranking policies. By integrating constrained optimization algorithms into its fair ranking function, SPOFR allows for analytical representation of expected utility metrics and end-to-end training for their optimization, along with theoretical insights into properties of its learned representations. These advantages may highlight the integration of constrained optimization and machine learning techniques as a promising avenue to address further modeling challenges in future research on learning to rank.

\begin{acks}
This research is partially supported by NSF grant 2007164.
\end{acks}

\bibliographystyle{abbrvnat}
\bibliography{bib}

\pagebreak\newpage
\appendix

\begin{figure}[tbh]
\includegraphics[width=0.490\linewidth]{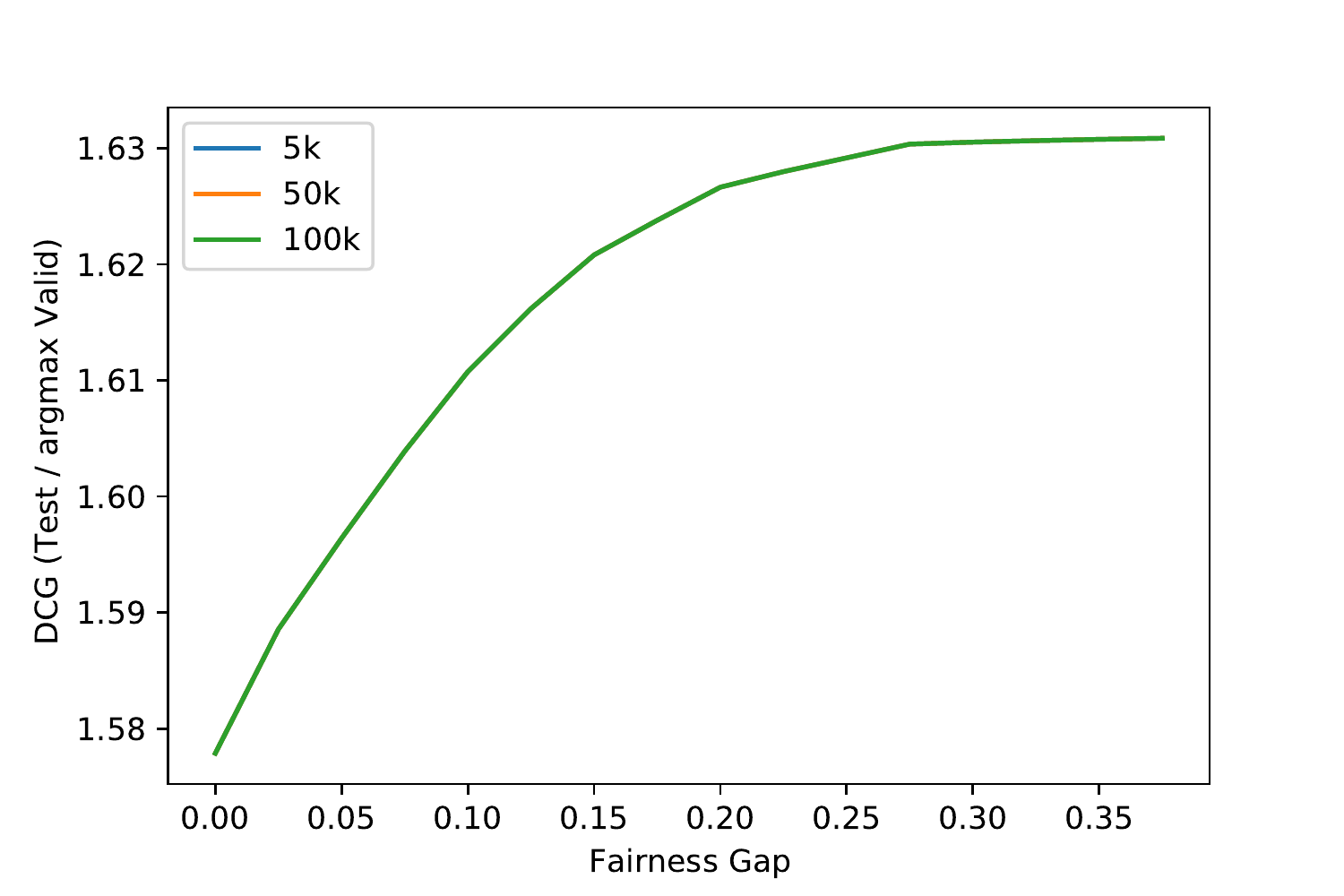} 
\includegraphics[width=0.490\linewidth]{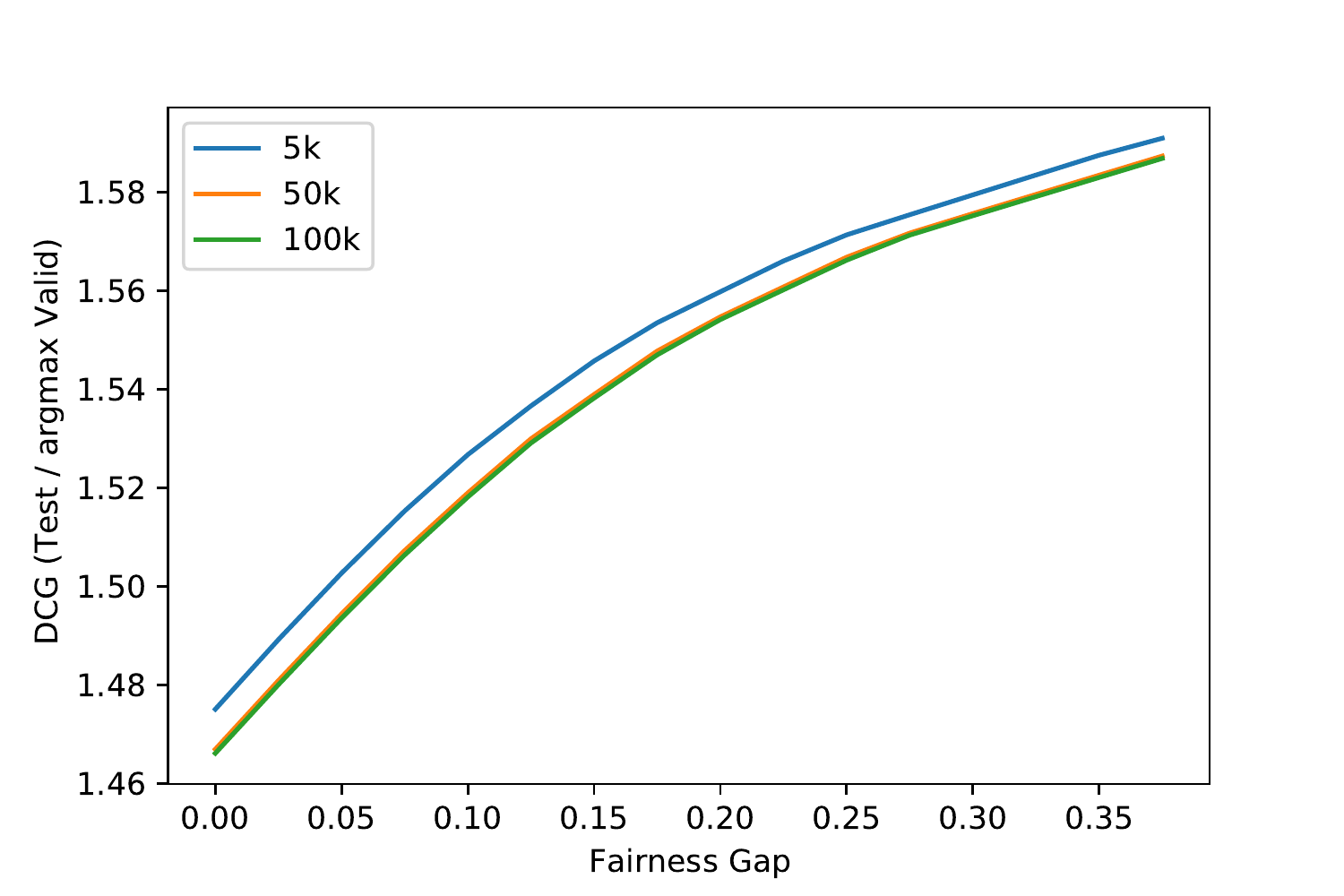} \\
\includegraphics[width=0.490\linewidth]{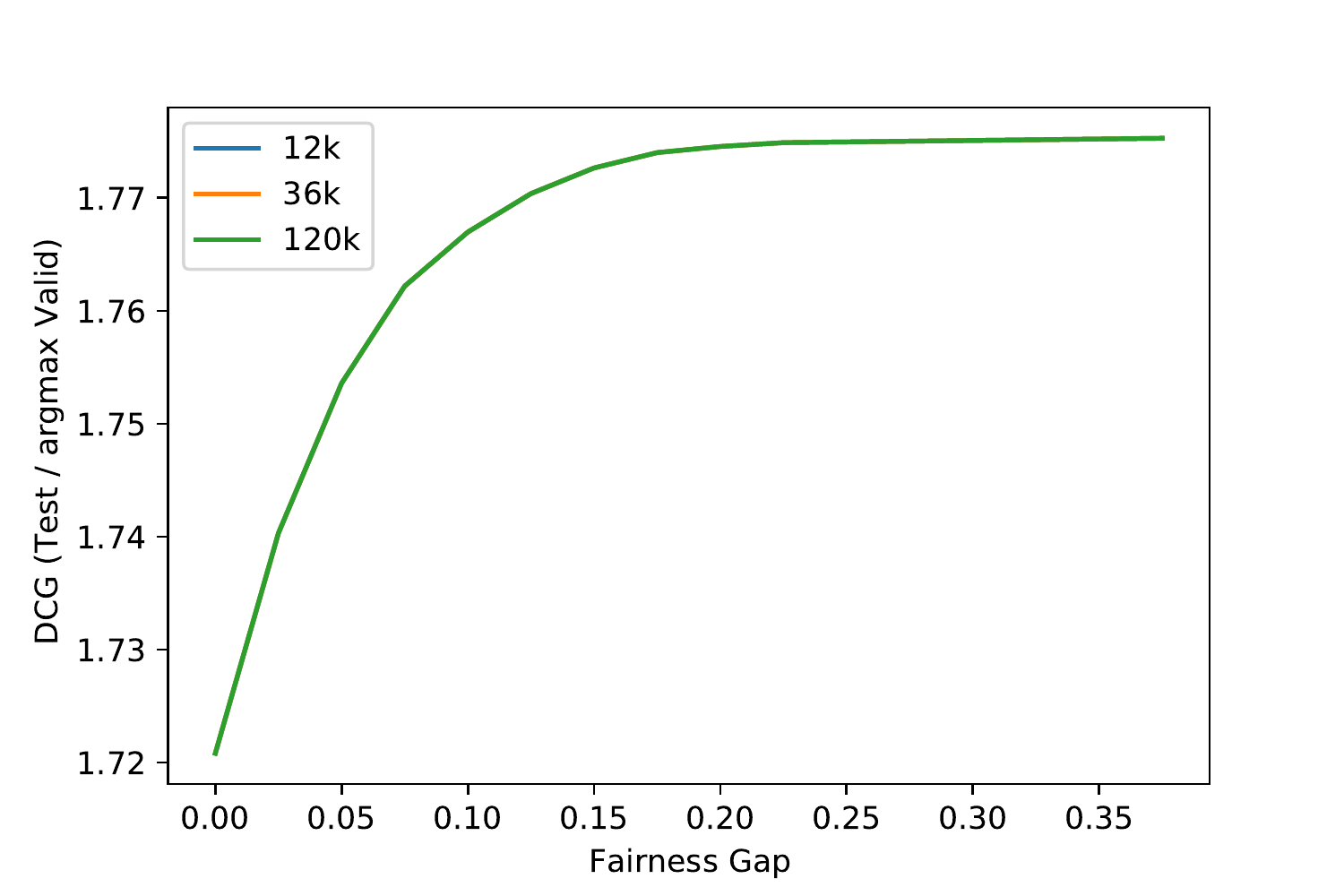} 
\includegraphics[width=0.490\linewidth]{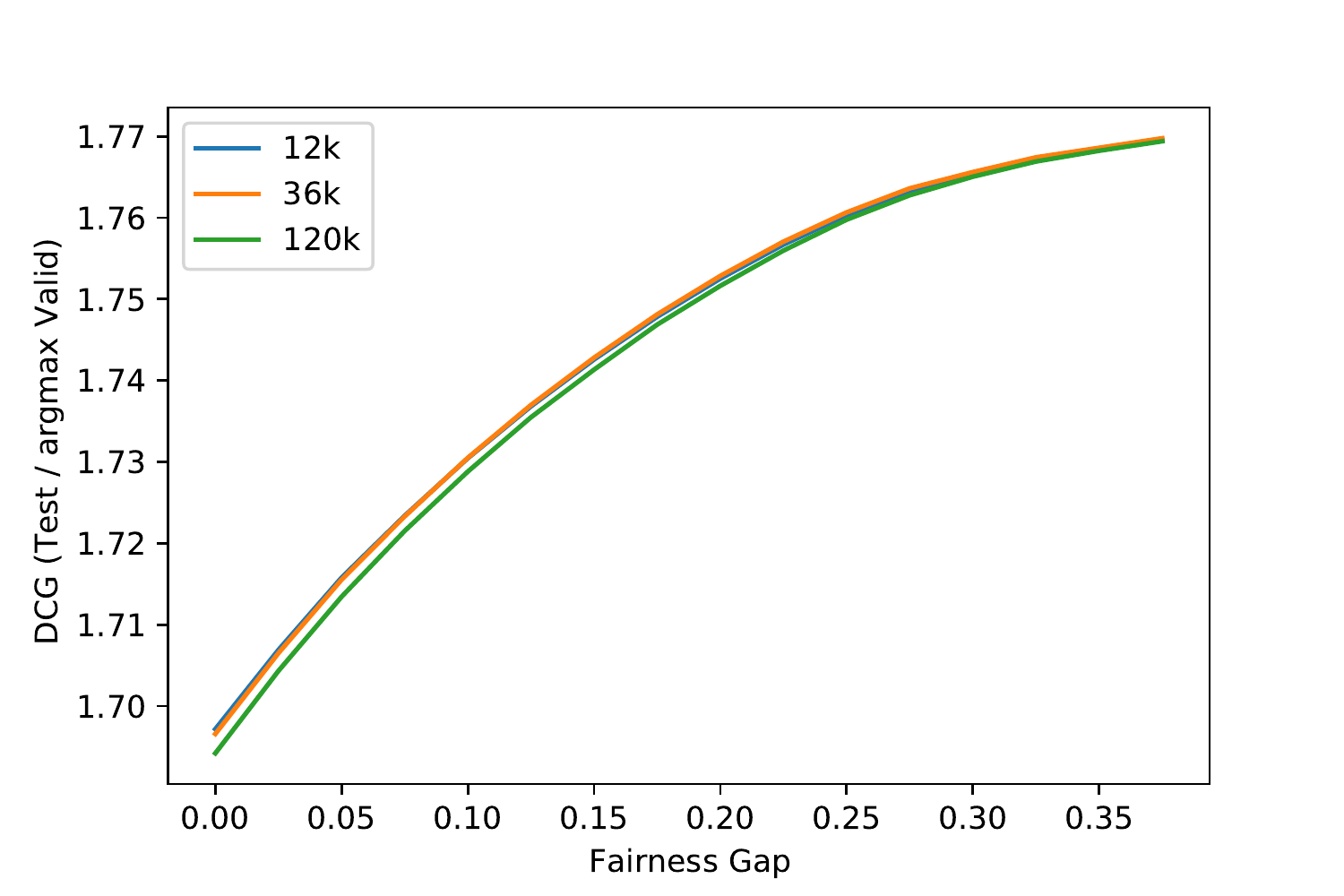} 
\caption{Top left: German Credit Unweighted Fairness;  Top right: German Credit Merit-Weighted Fairness;  Bottom left: MSLR Unweighted Fairness;  Bottom right: MSLR Merit-Weighted Fairness. Average DCG vs allowed fairness gap, SPOFR on datasets of $3$ different sizes}
\label{fig:spo_dcg_fairness_sizes}
\end{figure}
\begin{figure*}[!t]
\centering
\includegraphics[width=0.9\linewidth]{legend_conf.pdf}\\
\includegraphics[width=0.24\linewidth]{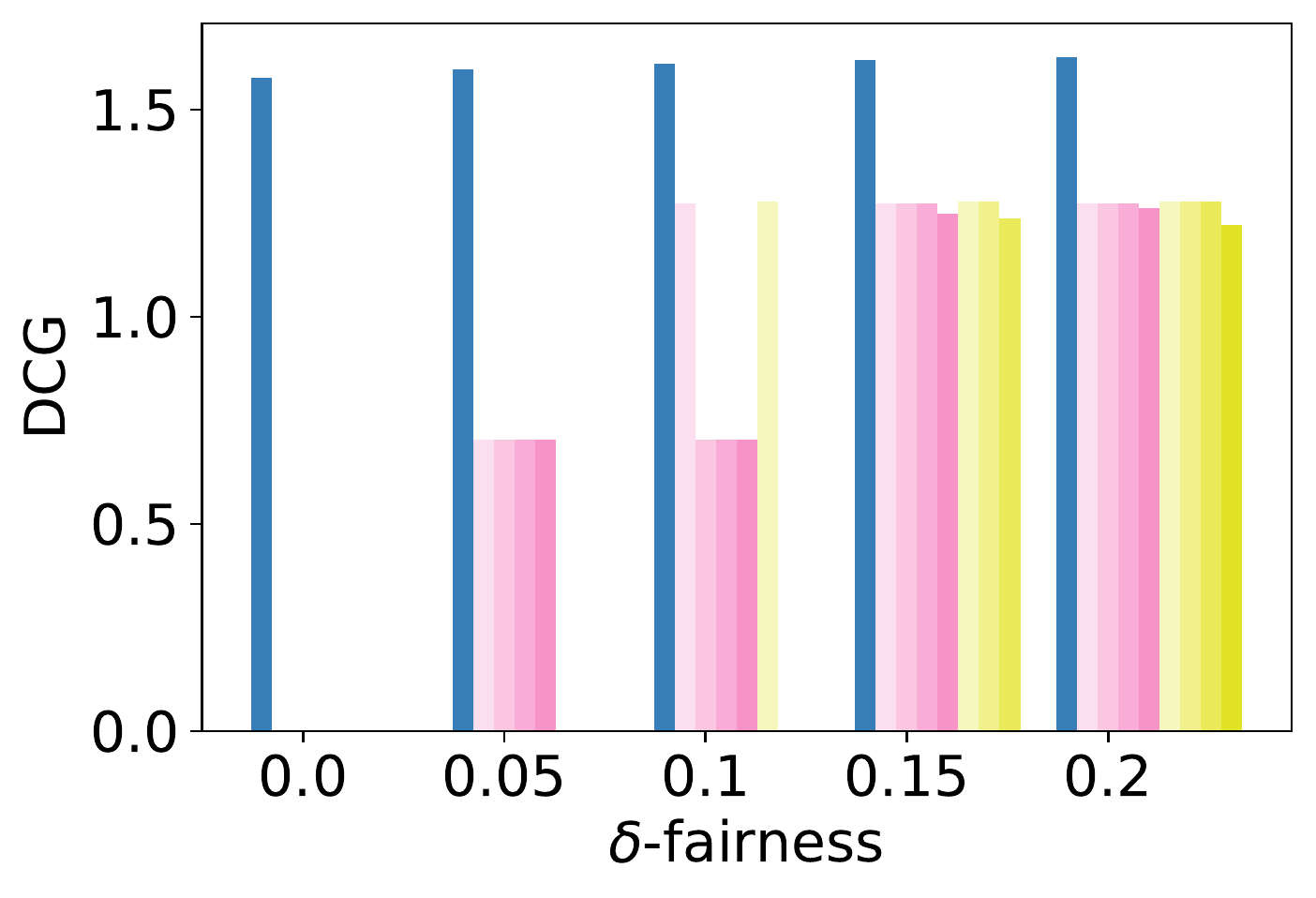}
\includegraphics[width=0.24\linewidth]{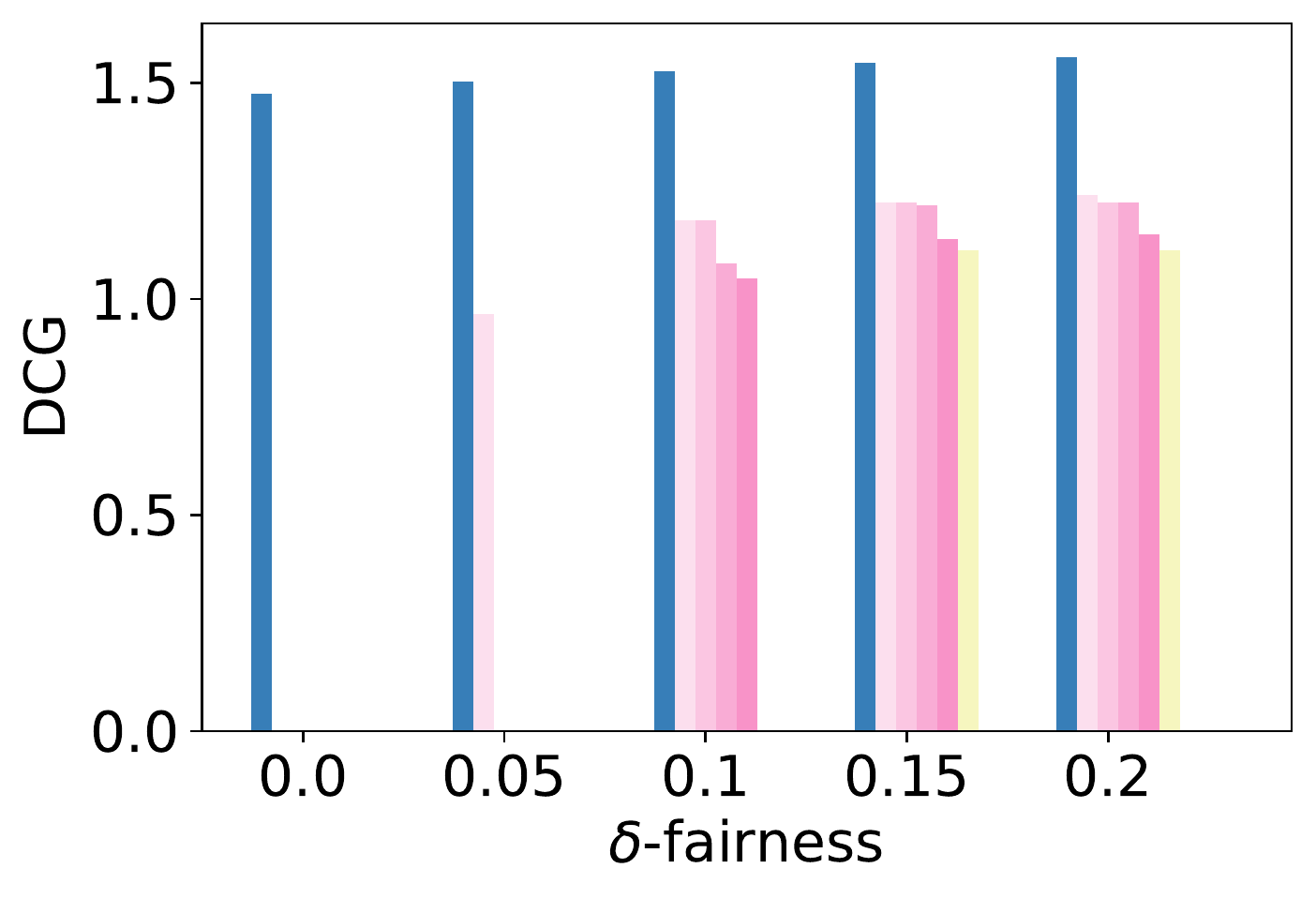}
\includegraphics[width=0.24\linewidth]{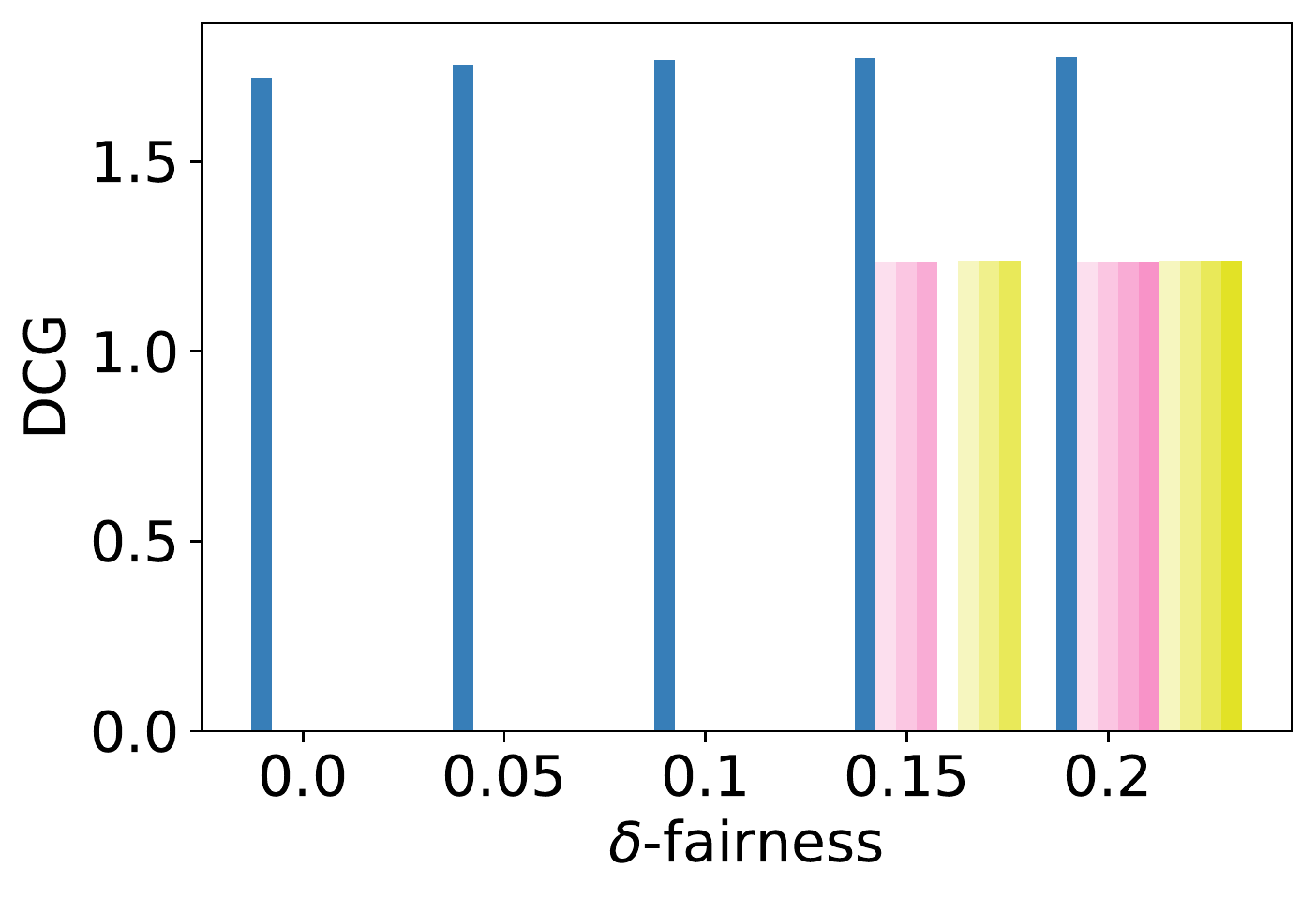}
\includegraphics[width=0.24\linewidth]{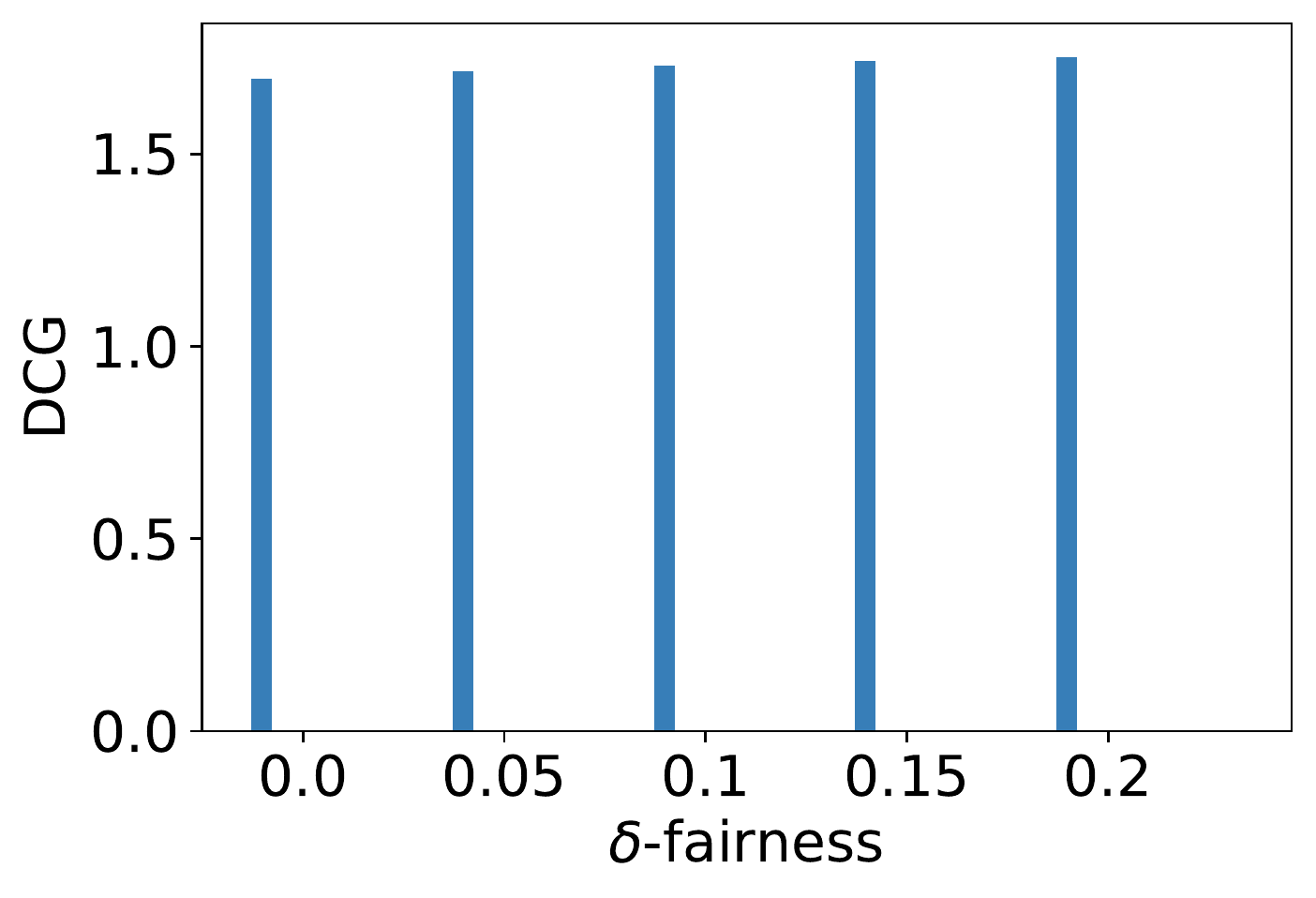}

\includegraphics[width=0.24\linewidth]{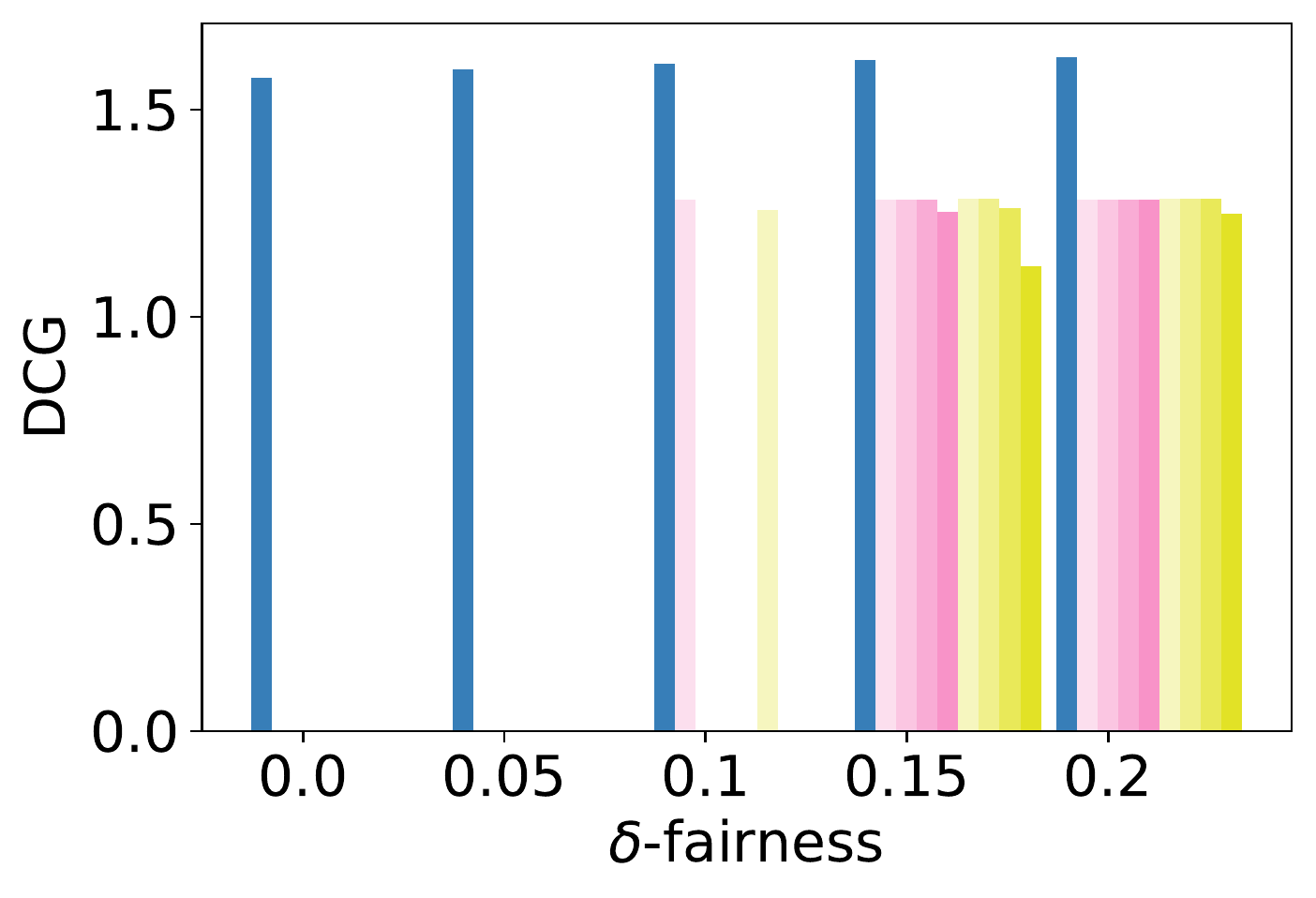}
\includegraphics[width=0.24\linewidth]{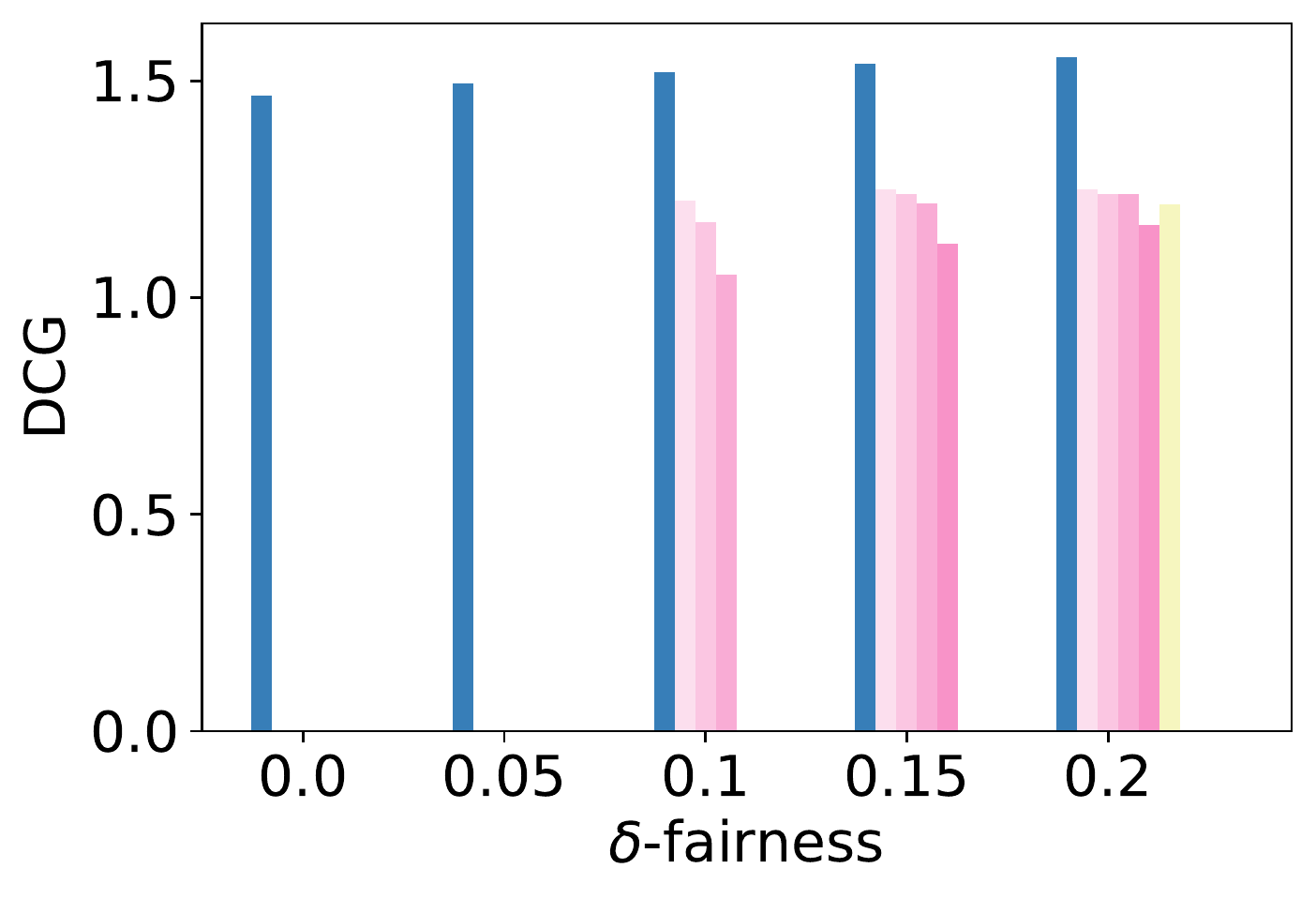}
\includegraphics[width=0.24\linewidth]{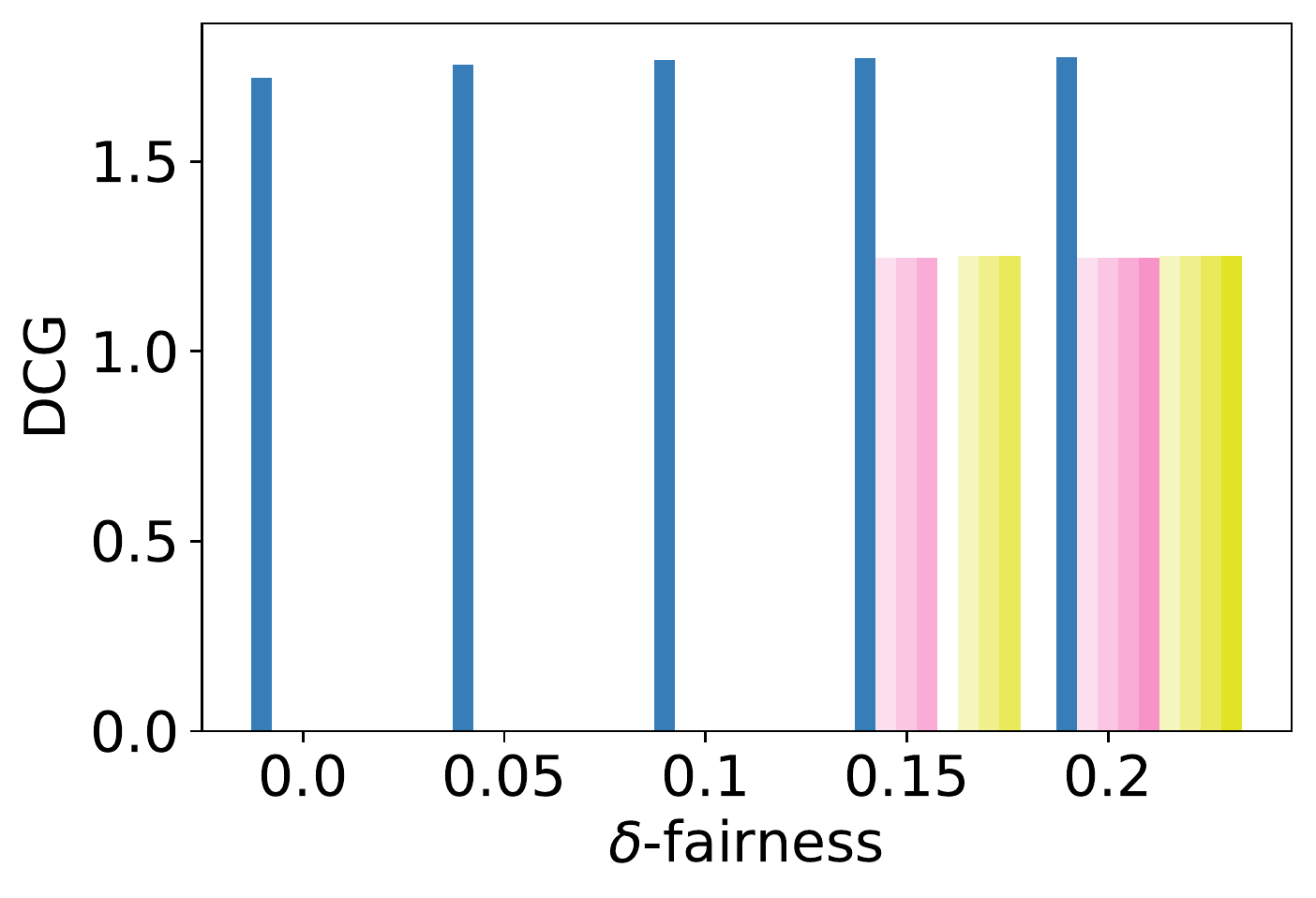}
\includegraphics[width=0.24\linewidth]{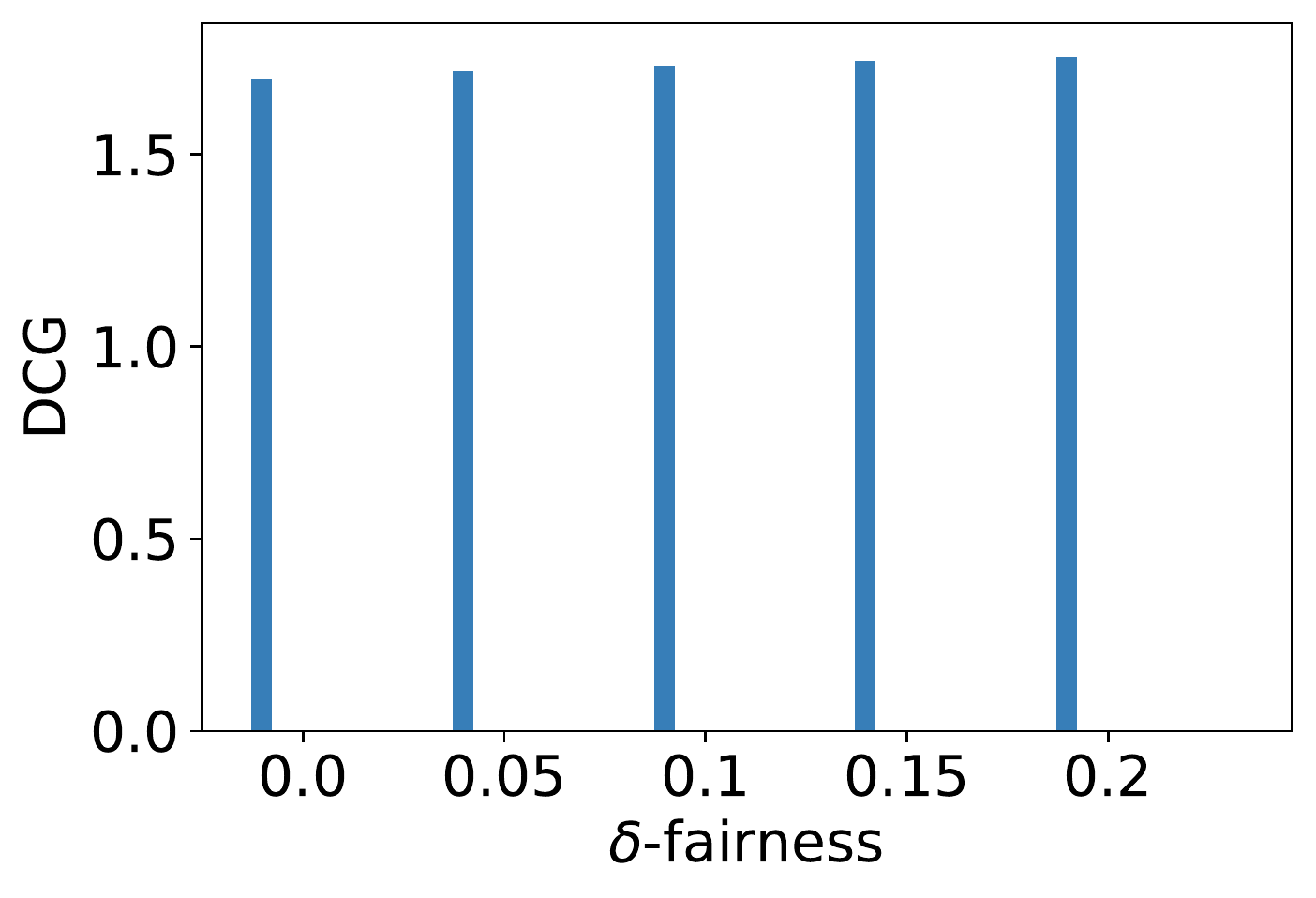}

\caption{Query level guarantees: German credit ($5K$ top row and $50K$ bottom row) unweighted ($1^{\text{st}}$ col.) and merit-weighted fairness ($2^{\text{nd}}$ col.);  
MSLR ($12K$ top row and $36K$ bottom row) unweighted ($3^{\text{rd}}$ col.) and merit-weighted fairness ($4^{\text{th}}$ col.).}
\label{fig:query_level_app}
\end{figure*}

\section{SPOFR: Implementation Details and Efficiency}
\label{app:efficiency}

For the implementation of SPOFR used to produce the experiments, the linear programming solver of Google OR-Tools was used \cite{ortools}. The algorithm has worst-case complexity that is exponential with respect to the size of the problem, but is well known to be extremely efficient in the average case \cite{bazaraa2008linear}. From an efficiency standpoint, there are three steps which must carried out to solve one instance of LP: {\bf{(1)}} Instantiation of the data structures through the solver API {\bf{(2)}} Finding a basic feasible solution to the LP problem  {\bf{(3)}} Finding the optimal solution, given a basic feasible solution.

A straightforward implementation that carries out all three steps is inefficient and can lead to long training times. Fortunately, steps {\bf{(1)}} and {\bf{(2)}} can be avoided by instantiating Model \ref{model:fair_rank} only once for each distinct (combination of) fairness constraints, and storing the respective solver states in memory. These constraints  depend only on the group identities of the items to be ranked.  Each time a query is encountered during training, the solving of Model \ref{model:fair_rank} can be resumed beginning with the solution found to the last instance sharing the same group composition. Using this solution as a hot start, only step {\bf{(3)}} is required to find the optimal policy and complete the forward pass.

For a list of length $n$, the number of distinct possible fairness constraints in the case of $2$ groups is $2^n$, which leads to unreasonable memory requirements for storing each required solver state. However, simple manipulations can be used to carry out the required calculations with only $n$ solvers held in memory, by exploiting symmetry. The group identities within an item list take the form of a binary vector $G$ of length $n$. Once sorted, there are only $n$ such distinct binary vectors.

For an input sample $(x_q)$ and corresponding group identity vector $G$, let $I_{sort} = argsort(G)$. The fairness constraints in Model \ref{model:fair_rank} are then formulated based on the sorted $G' = G[I_{sort}]$.

Let $C_{ij}$ be the objective coefficients corresponding to $\Pi_{ij}$ in Model \ref{model:fair_rank}. The rows of $C$ (corresponding to individual items) are permuted by the sorting indices of $G$:
$$ C' = C[I_{sort}][:].$$ 
Model \ref{model:fair_rank} is then solved using $G'$ and $C'$ in place of $G$ and $C$. The resulting optimal policy $\Pi'$ then need only be reverse-permuted in its rows to restore the original orders with respect to items: 
$$ \Pi = \Pi'[argsort(I_{sort})][:].$$

\begin{table}[tb]
\centering
\resizebox{\linewidth}{!}
{
    \begin{tabular}{rl lll}
    \toprule
         \textbf{Dataset size} & & \multicolumn{3}{c}{\textbf{Models}} \\
  \cmidrule(r){3-5} 
    &  & SPOFR & SPOFR* & FULTR\\
  \midrule
  \multirow{2}{*}{\textbf{5k}} & Time per Epoch (s)   & 425  & 39 & 5\\ 
                      & Epochs to Convergence  & 5 & 5 & 29 \\ 
  \multirow{2}{*}{\textbf{50k}} & Time per Epoch (s) & 2201 & 202 & 28\\ 
                      & Epochs to Convergence  & 1 & 1 & 18 \\ 
  \multirow{2}{*}{\textbf{100k}} & Time per Epoch (s) & 4338 & 398 & 60\\ 
                      & Epochs to Convergence  & 1 & 1 & 18 \\ 
  \bottomrule
\end{tabular}
}
  \caption{Runtime comparison}
\label{tab:runtime}
\end{table}

While the number of variables in Model \ref{model:fair_rank} increases quadratically with the size of the list to be ranked, its linear form ensures that it can scale efficiently to handle lists of reasonable size. Linear programming problems are routinely solved with millions of variables \cite{bazaraa2008linear}, and additionally are well-suited to benefit from hot-starting, when solutions to similar problem instances are known. Fortunately, the Smart Predict-and-Optimize framework is particularly amenable to hot-starting. Since a LP instance for each data sample must be solved at each epoch, a feasible solution to each LP is available from the previous epoch, corresponding to the same constraints and a cost vector which changes based on updates to the DNN model parameters during training \cite{mandi2019smart}. Storing a hot-start solution to each LP instance in a training set requires memory no larger than that of training set, and as the model weights converge, these hot-starts are expected to be very close to the optimal policies for each LP. The implementation described in this paper does not optimize the use of hot-starts since the use of the cached models, as described in this section, already provided large speedups, rendering training times required for SPOFR to replicate the benchmark evaluations of previous works very low. 

SPOFR and its improved implementation (SPOFR*) as described above are compared to FULTR with respect to runtime on German Credit datasets. Table \ref{tab:runtime} records the number of epochs required to converge on average over the hyperparameter search, along with average computation time per epoch. While DELTR reaches convergence relatively efficiently, it cannot produce competitive fair ranking results as shown in Section \ref{sec:experiments}; thus only SPOFR and FULTR are compared. Runtimes are reported as observed Intel(R) Xeon(R) Platinum 8260 CPU @ 2.40GHz.

\section{Experimental Setting}
\label{app:results}

\subsection{Partial-Information Setting} 
\label{sec:partial_info}
In the Full-Information
 setting, all relevance scores are typically elicited via expert
 judgments, which may be infeasible to obtain. 
 A more practical and common setting is to utilize the implicit feedback 
 from users (e.g., clicks, views) collected in an existing LTR system 
 as alternative relevance signals. However, this implicit feedback is 
 biased and cannot be directly aligned with true relevance. 
 Note that for a given query $q$, a user click is observed exclusively 
 if the user examined an item $i$ and the item was found relevant by the 
 underlying LTR model.
 Let $c_i$ and $o_i$ be a binary variables denoting, respectively, 
 whether the item $i$ was clicked and whether it was examined 
 by the user; Then, $c_i = o_i \cdot y_i$ holds. This is called
 the \textit{Partial-Information} setting~\cite{PropSVMRank:JoachimsSS17}. 

Of course, it is not appropriate to directly replace labels $y_i$ with 
$c_i$ in Equation~\eqref{eq:policy_utility}. To learn an unbiased ranking
policy in the Partial-Information setting, the implicit feedback data
must be \emph{debiased}. A widely adopted method is to use \emph
{Inverse Propensity Scoring (IPS)}, which introduces the
``propensity'' $p(o_i=1)$ of when the implicit feedback was logged.
The propensity can be modelled in various ways \cite
{PropSVMRank:JoachimsSS17, FangAJ19:ContextPBM}. The most common one
is the position-based examination model, where the propensity depends
on only the position of item $i$ in the ranking when the click was
logged. This means $p(o_{i}=1)=v_{\sigma(i)}$, where $v_k$ denotes
the examination probability at position $k$. These position biases
$v_k$ can be estimated with swap experiments~\cite
{PropSVMRank:JoachimsSS17} or intervention harvesting~\cite
{EsitimationNoIntrusive:AgarwalZWLNJ19}. With knowledge of the
propensity,  the  unbiased estimator for $\Delta$ becomes ~\cite
{PropSVMRank:JoachimsSS17}:
\begin{equation}
  \widehat{\Delta}(\sigma, c)=\sum_{i:c_i=1}\frac{ \Delta \left(\sigma
   , y_q  \right)}{p(o_{i}=1|\sigma)}.
\end{equation} The estimator is unbiased if all propensities are
 bounded away from zero~\cite{PropSVMRank:JoachimsSS17}.  
 Replacing $\Delta(\sigma, y_q)$ in Equation~\eqref
 {eq:policy_utility} with $\widehat{\Delta}(\sigma, c)$, leads to the unbiased utility estimator used to learn
 from implicit feedback in the Partial-Information setting.

\subsection{Hyper-parameters}
\label{app:hyper}

The prediction of item
 scores is the same for each model, with a single neural network
 which acts at the level of individual feature vectors as described
 in Section \ref{sec:predict}. The number of layers in each case is
 the maximum possible when each subsequent layer is halved in size;
 this depends on the length of a dataset's item feature vectors,
 which constitute the model inputs. Hyperparameters were selected as
 the best-performing on average among those listed in Table \ref{app:hyper}). 
 {Final hyperparameters for each model are
 as stated also in Table \ref{tab:hyperparams}, and Adam optimizer is
 used in the production of each result.} Asterisks (*) indicate that 
 there is no option for a final value, as all values of each parameter 
 are of interest in the analysis of fairness-utility tradeoff, as reported 
 in the experimental setting Section.
 
\begin{table}[tb]
\centering
\resizebox{\linewidth}{!}
{
\begin{tabular}{rl llll}
\toprule
  Hyperparameter    & \multicolumn{1}{c}{Min} 
  & \multicolumn{1}{c}{Max} & 
  \multicolumn{3}{c}{Final Value} \\
  \cmidrule(r){4-6} 
  & & & SPOFR & FULTR & DELTR\\
  \midrule
  learning rate   & $1e^{-6}$ & $1e^{-3}$ & $\bm{1e^{-5}}$ & $\bm{2.5e^{-4}}$ & $\bm{2.5e^{-4}}$ \\ 
  violation penalty $\lambda$   & 0 & 100 & \textbf{N/A} & \textbf{*} & \textbf{*}\\ 
  allowed violation $\delta$    & 0 & 0.4 & \textbf{*} & \textbf{N/A} & \textbf{N/A}\\ 
  entropy regularization decay  & 0.1 & 0.5 & \textbf{N/A} & $\bm{0.3}$ & \textbf{N/A}\\ 
  batch size  & 4 & 64 & \textbf{64} & \textbf{16} & \textbf{16}\\
  \bottomrule
\end{tabular}
}
  \caption{Hyperparameters}
\label{tab:hyperparams}
\end{table}

\subsection{Additional Experiments}
\label{app:additional_experiments} 
  Additional results are provided
 which investigate the effect of dataset size on the performance of
 SPOFR. For German Credit data, sizes $5k$, $50k$ and $100k$ are used
 along with $12k$, $36k$ and $120k$ for MSLR data. In both cases, the
 size of a dataset represents the number of training 'clicks'
 gathered by the click simulation. Figure \ref
 {fig:spo_dcg_fairness_sizes} indicates that the size of the training
 set does not affect test accuracy at convergence. In the case of
 merit-weighted fairness, the slight divergence in utility can be
 attributed to the difference in relative merit calculated on each
 differently-sized dataset. Figure \ref{fig:query_level_app} shows
 that both baseline methods benefit in terms of query-level fairness
 from increased dataset size, but the effect on SPOFR is negligible.
 Note also that FULTR reports an increase in accuracy as training
 samples are added \cite{yadav2021policy}. This may indicate an
 advantage in terms of data-efficiency attributable to SPOFR. 

\end{document}